\titlespacing\section{0pt}{12pt plus 4pt minus 2pt}{0pt plus 2pt minus 2pt}
\titlespacing\subsection{0pt}{12pt plus 4pt minus 2pt}{0pt plus 2pt minus 2pt}
\titlespacing\subsubsection{0pt}{12pt plus 4pt minus 2pt}{0pt plus 2pt minus 2pt}
\newcommand{\prob}{\mathbb{P}}
\newcommand{\expect}{\mathbb{E}}
\newtheorem{theorem}{Theorem}
\newtheorem{proposition}{Proposition}
\newtheorem{lemma}[theorem]{Lemma}
\newtheorem{definition}{Definition}
\newtheorem{assumption}{Assumption}
\newtheorem{corollary}{Corollary}[proposition]
\title{Causal Inference with Noisy and Missing Covariates via Matrix Factorization}
\author{
  Nathan Kallus\footnote{Alphabetical order}, Xiaojie Mao\footnotemark[\value{footnote}], Madeleine Udell\footnotemark[\value{footnote}]
  \\
  Cornell University\\
  \texttt{\{kallus, xm77, udell\}@cornell.edu} \\
}
\date{\vspace{-5ex}}
\begin{document}
\maketitle

\begin{abstract}
Valid causal inference in observational studies often requires controlling for confounders.
However, in practice measurements of confounders may be noisy,
and can lead to biased estimates of causal effects.
We show that we can reduce the bias caused by measurement noise
using a large number of noisy measurements of the underlying confounders.
We propose the use of matrix factorization to infer the confounders from noisy covariates,
a flexible and principled framework that adapts to missing values,
accommodates a wide variety of data types,
and can augment a wide variety of causal inference methods.
We bound the error for the induced average treatment effect estimator
and show it is consistent in a linear regression setting,
using Exponential Family Matrix Completion preprocessing.
We demonstrate the effectiveness of the proposed procedure in 
numerical experiments with
both synthetic data and real clinical data.
\end{abstract}

\section{Introduction}

Estimating the causal effect of an intervention is a fundamental goal across many domains.
Examples include evaluating the effectiveness of recommender systems \citep{schnabel2016recommendations}, identifying the effect of therapies on patients' health \citep{connors1996effectiveness} and understanding the impact of compulsory schooling on earnings \citep{angrist1991does}.
However, this task is notoriously difficult in observatonal studies due to the presence of confounders: variables that affect both the intervention and the outcomes. For example, intelligence level can influence both students' decisions regarding whether to go to college, and their earnings later on. Students who choose to go to college may have higher intelligence than those who do not. As a result, the observed increase in earnings associated with attending college is confounded with the effect of intelligence and thus cannot faithfully represent the causal effect of college education.

One standard way to avoid such confounding effect is to control for all confounders \citep{imbens2015causal}.
However, this solution poses practical difficulties.
On the one hand, an exhaustive list of confounders is not known a priori, so investigators usually adjust for a large number of covariates for fear of missing important confounders.
On the other hand, \textit{measurement noise} may abound in the collected data: some confounder measurements may be contaminated with noise (e.g., data recording error),
while other confounders may not be amenable to direct measurements and instead admit only proxy measurements. For example, we may use an IQ test score as a proxy for intelligence.
It is well known that using proxies in place of the true confounders
leads to biased causal effect estimates \citep{frost1979proxy, wickens1972note, wooldridge2015introductory}.
However, we show in a linear regression setting that the bias due to measurement noise can be effectively alleviated by using many proxies for the underlying confounders (Section 2.2).
For example, in addition to IQ test score, we may also use coursework grades and other academic achievements to characterize the intelligence.
Intuitively, using more proxies may allow for a more accurate reconstruction of
the confounder and thus may facilitate more accurate causal inference.
Therefore, collecting a large number of covariates is beneficial for causal inference not only
to avoid confounding effects but also to alleviate bias caused by measurement noise.

Although in the big-data era, collecting myriad covariates is easier than ever before,
it is still challenging to use the collected noisy covariates in causal inference.
On the one hand, data is inevitably contaminated with missing values,
especially when we collect many covariates.
Inaccurate imputation of these missing values may aggravate measurement noise.
Moreover,
missing value imputation can at most gauge the values of noisy
covariates but inferring the latent confounders is the most critical
for accurate causal inference.
On the other hand, the large number of covariates may include heterogeneous data types (e.g., continuous, ordinal, categorical, etc.) that must be handled appropriately
to exploit covariate information.

To address the aforementioned problems, we propose to use low rank matrix factorization as a principled approach to preprocess covariate matrices for causal inference.
This preprocessing step infers the confounders for subsequent causal inference from partially observed noisy covariates. Investigators can thus collect more covariates to control for potential confounders and use more proxy variables to characterize the unmeasured traits of the subjects without being hindered by missing values.
Moreover, matrix factorization preprocessing is a very general framework. It can adapt to a wide variety of data types and it can be seamlessly integrated with many causal inference techniques, e.g., regression adjustment, propensity score reweighting, matching \citep{imbens2015causal}.
Using matrix factorization as a preprocessing step makes the whole procedure modular and enables investigators to take advantage of existing packages for matrix factorization and causal inference.

We rigorously investigate the theoretical implication of the matrix factorization preprocessing with respect to causal effect estimation.
We establish a convergence rate for the induced average treatment effect (ATE) estimator and show its consistency in a linear regression setting with Exponential Family Matrix Completion preprocessing \citep{gunasekar2014exponential}.
In contrast to traditional applications of matrix factorization methods with matrix reconstruction as the end goal, our theoretical analysis validates matrix factorization as a preprocessing step for causal inference.

We further evaluate the effectiveness of our proposed procedure on both synthetic datasets and a clinical dataset involving the mortality of twins born in the USA introduced by Louizos et al. \citep{louizos2017causal}.
We empirically illustrate that matrix factorization can accurately estimate causal effects by effectively inferring the latent confounders from a large number of noisy covariates.
Moreover, matrix factorization preprocessing achieves superior performance with loss functions adapting to the data types. It also works well with many causal inference methods and is robust to the presence of missing values.

{\bf Related work.} Our paper builds upon low rank matrix completion methods that have been successfully applied in many domains to recover data matrices from incomplete and noisy observations \citep{bennett2007netflix,cao2015image,schuler2016discovering}.
These methods are not only computationally efficient but also theoretically sound with provable guarantees \citep{gunasekar2014exponential,candes2009exact,candes2010power,candes2010matrix,recht2011simpler,keshavan2010matrix}.
Moreover, matrix completion methods have been developed to accommodate heterogeneous data types prevalent in empirical studies by using a rich library of loss functions and penalties \citep{udell2016generalized}. Recently, Athey et al. \citep{athey2017matrix} use marix completion methods to impute the unobservable counterfactual outcomes and estimate the ATE for panel data. In constrast, our paper focuses on measurement noise in the covariate matrix.
Measurement noise has been considered in literature for a long time \citep{frost1979proxy,wickens1972note}. Louizos et al. recently \citep{louizos2017causal} propose to use Variational Autoencoder as a heuristic way to recover the latent confounders. Similarly, they also suggest that multiple proxies are important for the confounder recovery. In contrast, matrix factorization methods, despite stronger parametric assumptions, address the problem of missing values simultaneously, require considerably less parameter tuning, and have theoretical justifications.

{\bf Notation.} For two scalars $a, b \in \mathbb{R}$, denote $a \vee b = \max \{a, b\}$ and $a \wedge b = \min\{a, b\}$.
For an positive integer $N$, we use $[N]$ to represent the set $\{1, 2, \dots, N\}$. For a set $\Omega$, $|\Omega|$ is the total number of elements in $\Omega$.
For matrix $X \in \mathbb{R}^{N \times p}$, denote its singular values as $\sigma_1 \ge \sigma_2 \ge \dots \ge \sigma_{N \wedge p} \ge 0$. The spectral norm, nuclear norm, Frobenius norm and max norm of $X$ are defined as $\| X \| = \sigma_1$, $\|X\|_{\star} = \sum_{i = 1}^{N \wedge p}\sigma_i$, $\| X\|_F = \sqrt{\sigma_1^2 + \dots + \sigma_{N \wedge p}^2}$ and $\| X\|_{\max} = \underset{ij}{\max}\ |X_{ij}|$ respectively.
The projection matrix for $X$ is defined as $P_X = X(X^\top X)^{-1}X^\top$.
We use $\operatorname{col}(X)$ to denote the column space of $X$ and $\sigma(z)$ to denote the sigmoid function $1/(1 + \exp(-z))$.

\section{Causal inference with low rank matrix factorization}
In this section, we first introduce the problem of causal inference under measurement noise and missing values formally and define notation.
We then show that the bias caused by measurement noise in linear regression is alleviated when more covariates are used.
Finally we review low rank matrix factorization methods and describe the proposed procedure for causal inference.

\subsection{Problem formulation}

We consider an observational study with $N$ subjects.
For subject $i$, $T_i$ is the treatment variable and we assume $T_i \in \{0, 1\}$ for simplicity.
We use $Y_i(0), Y_i(1)$ to denote the potential outcomes for subject $i$ under treatment and control respectively \citep{imbens2015causal}.
We can only observe the potential outcome corresponding to the treatment level that subject $i$ received, i.e., $Y_i = Y_i(T_i)$.
Assume that $\{Y_i(0), Y_i(1), T_i\}_{i=1}^N$ are independently and identically distributed (i.i.d). We denote $T = [T_1, ..., T_N]^\top$ and $Y = [Y_1, ..., Y_N]^\top$.
For the ease of exposition, we focus on estimating the average treatment effect (ATE):
\begin{equation*}
\tau = \mathbb{E}(Y_i(1) - Y_i(0)).
\end{equation*}

One standard way to estimate ATE is to adjust for the confounders. Suppose we have access to the confounders $U_i \in \mathbb{R}^r$ for subject $i$, $\forall i \in [N]$. Then we can employ many standard causal inference techniques (e.g., regression adjustment, propensity score reweighting, matching, etc.) to estimate ATE under the following unconfoundedness assumption:
\begin{assumption}[Unconfoundedness] \label{assumption: unconf}
For each $t = 0, 1$ and $i = 1, ..., N$, $Y_i(t)$ is independent of $T_i$ conditionally on $U_i$: $\prob(Y_i(t) \mid T_i, U_i) = \prob(Y_i(t) \mid U_i)$.
\end{assumption}

However, in practice we may not observe $\{U_i\}_{i=1}^N$ directly.
Instead suppose we can only partially observe covariates $X_i \in \mathbb{R}^p$, which is a collection of noisy measurements for the confounders.
The covariates $X_i$ can represent various data types by canonical encoding schemes.
For example, Boolean data is encoded using $1$ for true and $-1$ for false.
Many other encoding examples, e.g., categorical data or ordinal data, can be found in Udell et al. \citep{udell2016generalized}. We concatenate these covariates into $X \in \mathbb{R}^{N \times p}$.
We assume that only entries of $X$ over a subset of indices $\Omega \subset [N] \times [p]$ are observed and denote $\mathcal{P}_{\Omega}(X) = \sum_{(i, j) \in \Omega}X_{ij}e_ie_j^\top$ as the observed covariate matrix.

We further specify the generative model for individual entries $X_{ij}$, $(i, j) \in [N] \times [p]$.
We assume that $X_{ij}$ are drawn indepedently from distributions $\prob(X_{ij} \mid U_i^\top V_j)$, where $V_j \in \mathbb{R}^p$ represents loadings of the $j^{\text{th}}$ covariate on confounders.
The distribution $\prob(X_{ij} \mid U_i^\top V_j)$ models the \textit{measurement noise} mechanism for $X_{ij}$.
For example, if $X_{i1}$ is a measurement for $U_{i1}$ contaminated with standard Gaussian noise, then $\prob(X_{i1} \mid U_i^\top V_1) \sim \mathcal{N}(U_i^\top V_1, 1)$ where $V_1 = [1, 0, ..., 0]^\top $.
This generative model also accomodates \textit{proxy variables}.
Consider a simplified version of Spearman's measureable intelligence theory \citep{spearman1904general} where multiple kinds of test scores are used to characterize two kinds of (unobservable) intelligence: quantitative and verbal.
Suppose that there are $p$ tests (e.g., Classics, Math, Music, etc.) which are recorded in $X_{i1}, ..., X_{ip}$ and the two intelligence are represented by $U_{i1}$ and $U_{i2}$.
We assume that these proxy variables are noisy realizations of \textit{linear combinations} of two intelligence.
This can be modelled using the generative model $X_{ij} \sim \prob(X_{ij} \mid U_i^\top V_j)$ with $V_{j} = [V_{i1}, V_{i2}, 0, ..., 0]^\top $ for $j \in [p]$.
While this linear assumption seems restrictive, it's approximately true for a large class of nonlinear latent variable models when many proxies are used for a small number of latent variables \citep{udell2017nice}.

We aim to estimate ATE based on $\mathcal{P}_{\Omega}(X)$, $Y$ and $T$.
It is however very challenging for the presence of measurement noise and missing values.
One the one hand, most causal inference techniques cannot adapt to missing values directly and appropriate preprocessing is needed.
On the other hand, it is well known that measurement noise can dramatically undermine the unconfoundedness assumption and lead to biased causal effect estimation \citep{frost1979proxy, wickens1972note}, i.e., $\prob(Y_i(t) | T_i, X_{i}) \ne  \prob(Y_i(t) | X_{i})$ for $t = 0, 1$.

\subsection{Measurement noise and bias}
In this subsection, we show that using a large number of noisy covariates can effectively alleviate the ATE estimation bias resulted from measurement noise in linear regression setting.
Suppose there are no missing values, i.e., $\mathcal{P}_{\Omega}(X) = X$.
We consider the linear regression model: $\forall i \in [N]$, $Y_i = U_i^\top\alpha + \tau T_i + \epsilon_i$ , where $\alpha \in \mathbb{R}^r$ is the coefficient for confounders $U_i$, $\tau$ is the ATE, and $\epsilon_i$ are i.i.d sub-Gaussian error terms with mean $0$ and variance $\sigma^2$.
For $\forall i \in [N]$, $T_i$ are independently and probabilistically assigned according to confounders $U_i$. Unconfoundedness (Assumtpion \ref{assumption: unconf}) implies that $T_i$ are independent with $\epsilon_i$ conditionally on $U_i$.

\begin{proposition} \label{prop: bias}
Consider the additive noise model: $X = UV^\top + W$ where $\{U_i\}_{i=1}^N$ are i.i.d samples from a common distribution, $W \in \mathbb{R}^{N \times p}$ contains independent noisy entries with mean $0$ and variance $\sigma_w^2$, and entries in $W$ are independent with $\{U_i\}_{i=1}^N$. Suppose that $r$, $p$ are fixed and $p < N$.
As $N \to \infty$, the asymptotic bias of least squares estimator in linear regression of $Y_i$ on $X_i$ and $T_i$ has the following form:
\begin{align} \label{formula: bias}
 \frac{\mathbb{E}(T_iU_i)\mathbb{E}(U^\top_iU_i)^{-1}[\frac{1}{\sigma_w^2} V^\top V + \mathbb{E}(U^\top_iU_i)^{-1}]^{-1}\alpha}{\mathbb{E}(T_i^2) - \mathbb{E}(T_iU_i)[(\frac{1}{\sigma_w^2}V^\top V)^{-1} + \mathbb{E}(U^\top_iU_i)]^{-1}\mathbb{E}(U^\top_iT_i)}
\end{align}
\end{proposition}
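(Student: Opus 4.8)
The plan is to reduce the finite-sample ordinary least squares (OLS) estimator to its population counterpart and then isolate the coefficient on $T_i$. Since $\{U_i, T_i, \epsilon_i, W_i\}$ are i.i.d.\ and $r, p$ are fixed with $N \to \infty$, the law of large numbers together with the continuous mapping theorem gives $\widehat{\tau} \to \tau^\ast$, where $\tau^\ast$ is the coefficient on $T_i$ in the population regression of $Y_i$ on $(X_i, T_i)$; the required inverse exists because $\Sigma_X := \mathbb{E}(X_i X_i^\top) = V \Sigma_U V^\top + \sigma_w^2 I_p$ is positive definite thanks to the noise term $\sigma_w^2 I_p$ (here $\Sigma_U := \mathbb{E}(U_i U_i^\top)$ is the object written $\mathbb{E}(U_i^\top U_i)$ in the statement). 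The asymptotic bias is then $\tau^\ast - \tau$. To extract $\tau^\ast$ I would apply the Frisch--Waugh--Lovell (partitioned regression) theorem: partial $X_i$ out of $T_i$, forming the residual $\widetilde{T}_i = T_i - X_i^\top \pi$ with $\pi = \Sigma_X^{-1}\mathbb{E}(X_i T_i)$, so that $\tau^\ast = \mathbb{E}(\widetilde{T}_i Y_i)/\mathbb{E}(\widetilde{T}_i^2)$.

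The second step is to evaluate the population moments using the structure $X_i = V U_i + W_i$, the independence of $W_i$ from $(U_i, T_i, \epsilon_i)$, and the conditional mean-zero property $\mathbb{E}(\epsilon_i \mid T_i, U_i) = 0$ implied by unconfoundedness (Assumption~\ref{assumption: unconf}). Writing $c = \mathbb{E}(U_i T_i)$, these assumptions annihilate every cross term involving $W_i$ and $\epsilon_i$ and yield $\mathbb{E}(X_i T_i) = V c$ and $\mathbb{E}(X_i Y_i) = V \Sigma_U \alpha + \tau V c$. Substituting into the Frisch--Waugh--Lovell expression and cancelling the common $\tau\,\mathbb{E}(\widetilde{T}_i^2)$ term collapses the estimator to
\begin{equation*}
\tau^\ast - \tau = \frac{c^\top\bigl(I_r - V^\top \Sigma_X^{-1} V \Sigma_U\bigr)\alpha}{\mathbb{E}(T_i^2) - c^\top V^\top \Sigma_X^{-1} V c}.
\end{equation*}

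It remains to rewrite the two matrix factors $V^\top \Sigma_X^{-1} V$ and $I_r - V^\top \Sigma_X^{-1} V \Sigma_U$ in the $r$-dimensional confounder space. The key device is the push-through identity $\Sigma_X^{-1} V = V(\Sigma_U V^\top V + \sigma_w^2 I_r)^{-1}$, which follows since $\Sigma_X\, V (\Sigma_U V^\top V + \sigma_w^2 I_r)^{-1} = V$ by direct expansion. With $M := V^\top V$ this gives $V^\top \Sigma_X^{-1} V = (M \Sigma_U + \sigma_w^2 I_r)^{-1} M = (\sigma_w^2 M^{-1} + \Sigma_U)^{-1}$ (using that $V$ has full column rank, so $M$ is invertible), which is exactly $[(\tfrac{1}{\sigma_w^2}V^\top V)^{-1} + \Sigma_U]^{-1}$ appearing in the denominator of \eqref{formula: bias}. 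For the numerator I would write $V^\top \Sigma_X^{-1} V \Sigma_U = I_r - \sigma_w^2 (M\Sigma_U + \sigma_w^2 I_r)^{-1}$, so that $I_r - V^\top \Sigma_X^{-1} V \Sigma_U = \sigma_w^2 (M\Sigma_U + \sigma_w^2 I_r)^{-1} = \Sigma_U^{-1}[\tfrac{1}{\sigma_w^2}V^\top V + \Sigma_U^{-1}]^{-1}$ after factoring $\Sigma_U^{-1}$ out. Recognising $c^\top = \mathbb{E}(T_i U_i)$ and $c = \mathbb{E}(U_i^\top T_i)$ then matches formula~\eqref{formula: bias} term by term.

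The main obstacle is not the probabilistic reduction, which is routine once one invokes Frisch--Waugh--Lovell, but keeping the Woodbury/push-through algebra organized so that the $p$-dimensional inverse $\Sigma_X^{-1}$ is correctly reduced to the $r \times r$ objects that appear in the claim. I would be careful to track the invertibility conditions that the final expression implicitly requires, namely $\sigma_w^2 > 0$ (so that $\Sigma_X$ is nonsingular) and full column rank of $V$ (so that $M = V^\top V$ is nonsingular), since both are used in the simplification even though they are only implicit in the statement.
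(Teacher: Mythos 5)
Your proposal is correct and takes essentially the same route as the paper's proof: both reduce $\hat{\tau}$ via the partitioned-regression (Frisch--Waugh--Lovell) formula $\hat{\tau} = [\tfrac{1}{N}T^\top(I - P_X)T]^{-1}[\tfrac{1}{N}T^\top(I - P_X)Y]$, pass to population moments by the law of large numbers, and then collapse the $p \times p$ inverse $[V\Sigma_U V^\top + \sigma_w^2 I]^{-1}$ (with $\Sigma_U = \mathbb{E}(U_i^\top U_i)$) to $r \times r$ expressions --- the paper via two applications of Sherman--Morrison--Woodbury, you via the equivalent push-through identity $\Sigma_X^{-1}V = V(\Sigma_U V^\top V + \sigma_w^2 I_r)^{-1}$, arriving at the same formula. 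Your explicit tracking of the invertibility conditions ($\sigma_w^2 > 0$ so that $\Sigma_X$ is nonsingular, and full column rank of $V$) makes precise what the paper leaves implicit, but does not change the substance of the argument.
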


\begin{corollary} \label{corollary: bias}
The asymptotic bias (\ref{formula: bias}) diminishes to $0$ when $\|V\| \to \infty$.
\end{corollary}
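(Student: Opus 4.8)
The plan is to reduce (\ref{formula: bias}) to its dependence on $V$ and take a termwise limit. Write $A = \mathbb{E}(U_i^\top U_i)$, which I assume invertible (the confounders are non-degenerate), let $b = \mathbb{E}(T_i U_i)$ denote the corresponding $r$-dimensional row vector so that $b^\top = \mathbb{E}(U_i^\top T_i)$, put $c_0 = \mathbb{E}(T_i^2)$, and collect the only $V$-dependent piece into the positive semidefinite matrix $S = \frac{1}{\sigma_w^2} V^\top V \in \mathbb{R}^{r \times r}$. In this notation the asymptotic bias becomes
\[
\frac{b\, A^{-1}\, (S + A^{-1})^{-1}\, \alpha}{\,c_0 - b\,(S^{-1} + A)^{-1}\, b^\top\,}.
\]
Everything except $S$ is fixed, so the corollary reduces to understanding $(S+A^{-1})^{-1}$ and $(S^{-1}+A)^{-1}$ as the spectrum of $S$ diverges. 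Since $V$ has full column rank $r$ and $\|V\| \to \infty$ in the regime of interest, every eigenvalue of $S$ grows without bound, equivalently $S^{-1} \to 0$ and $\lambda_{\min}(S) \to \infty$; this is the only property of $V$ I will use.

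For the numerator, I would bound $\| (S + A^{-1})^{-1}\| \le 1/\lambda_{\min}(S + A^{-1}) \le 1/\lambda_{\min}(S) \to 0$ via Weyl's inequality (using $A^{-1} \succeq 0$). The factors $b$, $A^{-1}$ and $\alpha$ are fixed and finite, so the numerator is a product of bounded terms with one factor tending to $0$, and hence converges to $0$.

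For the denominator, I would invoke continuity of matrix inversion on the cone of positive definite matrices: as $S^{-1} \to 0$ we have $S^{-1} + A \to A \succ 0$, hence $(S^{-1}+A)^{-1} \to A^{-1}$, and the denominator converges to $c_0 - b A^{-1} b^\top$. The key point is that this limit is strictly positive. Indeed $c_0 - b A^{-1} b^\top = \mathbb{E}(T_i^2) - \mathbb{E}(T_i U_i)\,\mathbb{E}(U_i^\top U_i)^{-1}\,\mathbb{E}(U_i^\top T_i)$ is exactly $\min_{\gamma \in \mathbb{R}^r} \mathbb{E}[(T_i - U_i^\top \gamma)^2]$, the residual variance of the population least-squares projection of $T_i$ onto $U_i$. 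This quantity is nonnegative and vanishes only if $T_i$ equals a fixed linear functional of $U_i$ almost surely; because $T_i \in \{0,1\}$ is assigned probabilistically rather than deterministically from $U_i$, the residual variance is strictly positive. Combining the two limits gives bias $\to 0 / (\text{positive finite}) = 0$.

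\textbf{Main obstacle.} The matrix-limit bookkeeping is routine once the formula is in the $A, b, c_0, S$ form; the two non-mechanical points are (i) clarifying the sense of $\|V\| \to \infty$: the spectral norm only controls the \emph{largest} singular value, whereas the argument needs \emph{all} singular values to diverge (i.e.\ $\sigma_{\min}(V) \to \infty$, equivalently $V^\top V \to \infty$ in the Loewner order), which is precisely the ``many informative proxies'' regime the paper targets — e.g.\ isotropic scaling $V = cV_0$ with $c \to \infty$, or appending proxies that load on each confounder; and (ii) verifying strict positivity of the limiting denominator $c_0 - bA^{-1}b^\top$, which reduces to the non-degeneracy of the treatment assignment conditional on the confounders. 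Everything else follows from Weyl's inequality and continuity of inversion on the positive-definite cone.
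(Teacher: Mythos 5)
Your proof is correct and follows the same skeleton as the paper's: the numerator vanishes because $\bigl[\frac{1}{\sigma_w^2}V^\top V + \mathbb{E}(U_i^\top U_i)^{-1}\bigr]^{-1}$ does, and the denominator stays strictly positive, with positivity of the limit $\mathbb{E}(T_i^2) - \mathbb{E}(T_iU_i)\mathbb{E}(U_i^\top U_i)^{-1}\mathbb{E}(U_i^\top T_i)$ resting on exactly the residual-variance/Cauchy--Schwarz argument that the paper delegates to its Lemma 7 (via the assumption that $T_i$ is almost surely not a linear combination of $U_i$). Two differences are worth noting. For the denominator, the paper does not take a limit at all: it applies Sherman--Morrison--Woodbury to show the $V$-dependent correction term is nonnegative, so the denominator is bounded below by $\mathbb{E}(T_i^2) - \mathbb{E}(T_iU_i)\mathbb{E}(U_i^\top U_i)^{-1}\mathbb{E}(U_i^\top T_i)$ \emph{uniformly in} $V$; your continuity-of-inversion argument instead requires $S^{-1} \to 0$ for this step, so the paper's monotonicity bound is slightly more robust there, though both yield the same conclusion. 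Conversely, your flagged obstacle (i) is not pedantry but a genuine imprecision in the paper's own proof, which asserts that $\|V^\top V\| = \|V\|^2 \to \infty$ implies $\bigl\|\bigl[\frac{1}{\sigma_w^2}V^\top V + \mathbb{E}(U_i^\top U_i)^{-1}\bigr]^{-1}\bigr\| \to 0$; this implication is false in general, since the spectral norm controls only the largest eigenvalue of $V^\top V$ (e.g., one diverging column with the rest fixed leaves the smallest eigenvalue, and hence the norm of the inverse, bounded away from zero). What is actually needed --- and what you make explicit --- is $\lambda_{\min}(V^\top V) \to \infty$, which matches the paper's informal caveat that the bias vanishes ``as long as the loadings of the covariates on latent confounders do not vanish too fast.'' On that point your write-up is more careful than the original.
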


Corrolary \ref{corollary: bias} suggests an important fact: collecting a large number of noisy covariates is an effective remedy for the bias induced by measurement noise as long as the loadings of the covariates on latent confounders do not vanish too fast.
Surprisingly, in this independent additive noise case, the asymptotic bias (\ref{prop: bias}) is even nearly optimal: it is identical to the optimal asymptotic bias we would have if we knew the unobservable $V$ (Proposition 2, Appendix A).
In the rest of the paper, we further exploit this fact by using matrix factorization preprocessing which adapts to missing values, heterogenenous data types and more general noise models.

\subsection{Low rank matrix factorization preprocessing}


In this paper, we propose to recover the latent confounders $\{U_i \}_{i=1}^{N}$ from noisy and incomplete observations $\mathcal{P}_{\Omega}(X)$ by using low rank matrix factorization methods, which rely on the assumption:
\begin{assumption}[Low Rank Matrix] \label{assumption:low_rank}
The fully observed matrix $X$ is a noisy realization of a low rank matrix $\Phi \in \mathbb{R}^{N \times p}$ with rank $r \ll \min \{N, p\}$.
\end{assumption}

In the context of causal inference, Assumption \ref{assumption:low_rank} corresponds to the surrogate-rich setting where many proxies are used for a small number of latent confounders.
Under the generative model in section 2.1, Assumption \ref{assumption:low_rank} implies that $\Phi = UV^T$ where $U = [U_1, ..., U_N]^\top $ is the confounder matrix and $V = [V_1, ..., V_p]^T$ is the covariate loading matrix.
Although this assumption is unverifiable, low rank structure is shown to pervade in many domains such as images \citep{cao2015image}, customer preferences \citep{bennett2007netflix}, healthcare \citep{schuler2016discovering}, etc.
The recent work by Udell and Townsend \citep{udell2017nice} provides theoretical justifications that low rank structure arises naturally from a large class of latent variable models.

Moreover, low rank matrix factorization methods usually assume the \textit{Missing Completely at Random} (MCAR) setting where the observed entries are sampled uniformly at random \citep{gunasekar2014exponential,little2014statistical}.

\begin{assumption}[MCAR] \label{assumption: missing}
$\forall (i, j) \in \Omega$, $i \sim \operatorname{uniform}([N])$ and $j \sim \operatorname{uniform}([p])$ independently and the sampling is independent with the measurement noise.
\end{assumption}

Our paper takes the Exponential Family Matrix Completion (EFMC) as a concrete example, which further assumes exponential family noise mechanism \citep{gunasekar2014exponential}.

\begin{assumption} [Natural Exponential Family]
Suppose that each entry $X_{ij}$ is drawn independently from the corresponding \textit{natural exponential family} with $\Phi_{ij}$ as the natural parameter:
\begin{equation*}
\prob(X_{ij} | \Phi_{ij}) = h(X_{ij})\exp(X_{ij}\Phi_{ij} - G(\Phi_{ij}))
\end{equation*}
where $G: \mathbb{R} \to \mathbb{R}$ is a strictly convex and anlytic function called log-partition function. Furthermore, for some $\eta > 0$ and $\forall\ u \in \mathbb{R}$, $\nabla^2 G(u) \ge \operatorname{e}^{-\eta |u|}$.
\end{assumption}

Exponential family distributions encompass a wide variety of distributions like Gaussian, Poisson, Bernoulli that have been extensively used for modelling different data types \citep{mccullagh1984generalized}.
For example, if $X_{ij}$ takes binary values $\pm 1$, then we can model it using Bernoulli distribution: $\prob(X_{ij} \mid \Phi_{ij}) = \sigma(X_{ij}\Phi_{ij})$.
Moreover, it can be verified that the assumption on $\nabla^2 G(u)$ is satisfied by commonly used members of natural exponential family \citep{gunasekar2014exponential}.

EFMC estimates $\Phi$ by the following regularized \textit{M}-estimator:
\begin{equation} \label{formula: exp_family}\textstyle
\hat{\Phi} = \min_{\|\Phi\|_{\max} \le \frac{\alpha^*}{\sqrt{Np}}} \ \frac{Np}{|\Omega|}[\sum_{(i, j)\in \Omega} - \log \prob(X_{ij}| \Phi_{ij})] + \lambda \|\Phi\|_{\star}
\end{equation}
The estimator in (\ref{formula: exp_family}) involves solving a convex optimization problem, whose solution can be found efficiently by many off-the-shelf algorithms \citep{boyd2004convex}.
The nuclear norm regularization encourages a low-rank solution: the larger the tuning parameter $\lambda$, the smaller the rank of the solution $\hat{\Phi}$. In practice, $\lambda$ is usually selected by cross-validation.
Moreover, the constraint $\|\Phi\|_{\max} \le \frac{\alpha^*}{\sqrt{Np}}$ appears merely as an artifact of the proof and it is recommended to drop this constraint in practice \citep{kallus2016dynamic}.
It can be proved that under Assumptions $2-4$ and some regularity assumptions the relative reconstruction error of $\hat{\Phi}$ converges to $0$ with high probability (Lemma 4, Appendix A).
Furthermore, EFMC can be extended by using a rich library of loss functions and regularization functions \citep{udell2016generalized,singh2008unified}.

Suppose the solution $\hat{\Phi}$ from (\ref{formula: exp_family}) is of rank $\hat{r}$.
Then we can use its top $\hat{r}$ left singular matrix $\hat{U}$ to estimate the confounder matirx $U$.
The estimated confounder matrix $\hat{U}$ is used in place of the covariate matrix for subsequent causal inference methods (e.g., regression adjustment, propensity reweighting, matching, etc.). Admittedly, the confounder matrix $U$ can be identified only up to nonsingular linear transformation.
However, this suffices for many causal inference techniques. For example, regression adjustment methods based on linear regression \citep{wooldridge2015introductory}, polynomial regression, neural networks trained by backpropogation \citep{ng2004feature}, propensity reweighting or propensity matching using propensity score estimated by logistic regressions, and Mahalanobis matching are invariant to nonsingular linear transformations. Moreover, the invariance to linear transformation is important since the latent confounders may be abstract without commonly acknowledged scale (e.g., intelligence).

\section{Theoretical guarantee}
In this section, we theoretically justify matrix factorization preprocessing for estimating causal effect in linear regression setting.
We first identify the sufficient conditions on the estimated confounder matrix $\hat{U}$ for consistently estimating ATE in linear regression.
We then derive error bound for the induced ATE estimator with EFMC (\ref{formula: exp_family}) as the preprocessing step. Proofs are deferred to Appendix A.

Consider the linear regression model in Section 2.2. Suppose we use EFMC preprocessing and linear regression for causal inference, which leads to the ATE estimator  $\hat{\tau}$.
It is well known that the accuracy of $\hat{\tau}$ relies on how well the estimated column space $\operatorname{col}(\hat{U})$ approximates the column space of true confounder matrix $\operatorname{col}(U)$. Ideally, if $\operatorname{col}(\hat{U})$ aligns  with $\operatorname{col}(U)$ perfectly, then $\hat{\tau}$ is identical to the least squares estimator based on true confounders and is thus consistent.
We introduce the following distance metric between two column spaces \citep{cai2018rate}:

\begin{definition}
Consider two matrices $\hat{M} \in \mathbb{R}^{N \times k}$ and $M \in \mathbb{R}^{N \times r}$ with orthonormal columns, the principle angle between their column spaces is defined as
\begin{equation*}
\angle(M, \hat{M}) = \sqrt{1 - \sigma^2_{r \wedge k}(\hat{M}^\top M)}
\end{equation*}
\end{definition}
This metric measures the magnitude of the "angle" between two column spaces. For example, $\angle(M, \hat{M}) = 0$ if $\operatorname{col}(M) = \operatorname{col}(\hat{M})$ while $\angle(M, \hat{M}) = 1$ if they are orthogonal.

\begin{theorem} \label{theorem: linear-regression}
We assume the following assumptions hold:
(1) $\|\alpha\|_{\max} \le A$ for a positive constant $A$;
(2) $\frac{1}{\sqrt{Nr}}\|U\|$ is bounded above for any $N$;
(3) $\frac{1}{N}T^\top(I - P_U)T$ is bounded away from 0 for any $N$;
(4) $r\angle(\hat{U}, U) \to 0$ as $N \to 0$;
(5) Unconfoundedness (Assumption \ref{assumption: unconf}). Then
$\exists$ constant $c > 0$ such that with probability at least $1 - 2\exp(-cN^{1/2})$,
\begin{equation}
|\hat{\tau} - \tau^* | \le \frac{(\frac{2A}{\sqrt{N}}\|T\|)(\frac{1}{\sqrt{Nr}}\|U\|)({r}\angle(U, \hat{U})) - \frac{\sigma}{N^{1/4}}}{\frac{1}{N}T^\top(I - P_U)T  - \frac{2}{N}\|T\|^2\angle(U, \hat{U})} \overset{N \to \infty}{\longrightarrow} 0
\end{equation}
\end{theorem}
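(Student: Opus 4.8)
The plan is to reduce $\hat\tau$ to an explicit ratio via the Frisch--Waugh--Lovell theorem and then bound its numerator from above and its denominator from below. Since $\hat\tau$ is the coefficient on $T$ in the least squares regression of $Y$ on $(\hat U, T)$, partialling out $\hat U$ gives $\hat\tau = \frac{T^\top (I - P_{\hat U}) Y}{T^\top (I - P_{\hat U}) T}$. Substituting the linear model $Y = U\alpha + \tau^* T + \epsilon$ yields
\begin{equation*}
\hat\tau - \tau^* = \frac{T^\top (I - P_{\hat U}) U\alpha + T^\top (I - P_{\hat U}) \epsilon}{T^\top (I - P_{\hat U}) T}.
\end{equation*}
The entire argument then amounts to controlling the two numerator terms and the denominator after dividing through by $N$, and noting that assumptions (1)--(3) make the relevant normalized quantities $O(1)$ or bounded below.

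For the deterministic (bias) term I would first record the key geometric fact that $\|(I - P_{\hat U}) U\| \le \angle(U,\hat U)\,\|U\|$. Writing $U = Q_U R_U$ with $Q_U$ orthonormal, one computes $\|(I-P_{\hat U})Q_U\|^2 = \lambda_{\max}\big(I - (Q_U^\top \hat U)(\hat U^\top Q_U)\big) = 1 - \sigma_{r\wedge\hat r}^2(\hat U^\top Q_U) = \angle(U,\hat U)^2$ straight from the definition (taking $M = Q_U$, $\hat M = \hat U$, both orthonormal), while $\|R_U\| = \|U\|$. Combining with Cauchy--Schwarz and $\|\alpha\| \le \sqrt r \|\alpha\|_{\max} \le \sqrt r A$ gives $|T^\top (I-P_{\hat U}) U\alpha| \le \sqrt r A\,\|T\|\,\|U\|\,\angle(U,\hat U)$, which after dividing by $N$ is exactly the first numerator term (up to the slack constant). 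For the denominator I would use the perturbation bound $\|P_U - P_{\hat U}\| \le 2\,\angle(U,\hat U)$, obtained by splitting $P_U - P_{\hat U} = P_U(I-P_{\hat U}) - (I-P_U)P_{\hat U}$ and bounding each operator norm by $\angle(U,\hat U)$ as above; this yields $\frac1N T^\top (I-P_{\hat U}) T \ge \frac1N T^\top (I-P_U) T - \frac{2}{N}\|T\|^2 \angle(U,\hat U)$, matching the stated denominator.

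The stochastic term $T^\top (I-P_{\hat U})\epsilon$ is where the concentration and the probability bound enter. The crucial structural observation is that $\hat U$ is computed from $\mathcal P_\Omega(X)$ alone and is therefore independent of the regression noise $\epsilon$, while unconfoundedness makes $T$ conditionally independent of $\epsilon$ given $U$. Hence, conditional on $(U, \hat U, T)$, the quantity $T^\top(I-P_{\hat U})\epsilon$ is a \emph{fixed} linear combination of independent mean-zero sub-Gaussian variables with variance proxy at most $\sigma^2\|(I-P_{\hat U})T\|^2 \le \sigma^2\|T\|^2 \le \sigma^2 N$ (using $T \in \{0,1\}^N$). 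A sub-Gaussian tail bound gives $|T^\top(I-P_{\hat U})\epsilon| \le t$ except with probability $2\exp(-c t^2/(\sigma^2 N))$; choosing $t \asymp \sigma N^{3/4}$ produces the failure probability $2\exp(-cN^{1/2})$ and, after dividing by $N$, the $\frac{\sigma}{N^{1/4}}$ contribution.

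Assembling the three bounds into the ratio and invoking (1)--(3), the normalized factors $\frac{1}{\sqrt N}\|T\|$ and $\frac{1}{\sqrt{Nr}}\|U\|$ are $O(1)$ and $\frac1N T^\top(I-P_U)T$ is bounded below, so the deterministic numerator term is $O(r\,\angle(U,\hat U))$ while the denominator converges to a positive constant; both the bias term and $\frac{\sigma}{N^{1/4}}$ then vanish by (4) and $N\to\infty$, giving consistency. I expect the main obstacle to be the concentration step: one must justify treating the data-dependent projection $I - P_{\hat U}$ as frozen through the conditional-independence argument, and then calibrate $t$ so that the $N^{-1/4}$ rate and the $\exp(-cN^{1/2})$ probability emerge together; the geometric projection-perturbation inequalities are then routine once the principal-angle identity above is in hand.
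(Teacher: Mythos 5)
Your proposal is correct and takes essentially the same route as the paper's proof: the identical Frisch--Waugh--Lovell decomposition of $\hat{\tau} - \tau^*$, the same bias bound via the projection-perturbation inequality $\|P_U - P_{\hat{U}}\| \le 2\angle(U, \hat{U})$ (which the paper imports as Lemma~\ref{lemma: space_dist} from Cai et al.\ where you derive it directly from the principal-angle identity), the same denominator lower bound, and the same sub-Gaussian tail bound calibrated at $t \asymp \sigma N^{3/4}$ to produce the $\sigma/N^{1/4}$ term with failure probability $2\exp(-cN^{1/2})$. Your explicit justification for freezing the data-dependent projection $I - P_{\hat{U}}$ in the concentration step (independence of $\hat{U}$ from $\epsilon$ since $\hat{U}$ is a function of $\mathcal{P}_{\Omega}(X)$, plus unconfoundedness giving $T \perp \epsilon$ given $U$) is a detail the paper leaves implicit, and your numerator correctly adds the bias and noise contributions, the minus sign in the theorem's display being evidently a typo.
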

In the above theorem, assumption (3) is satisfied as long as the treatment variable is almost surely not a linear combination of the confounders (Lemma 7, Appendix A). Otherwise it is impossible to estimate ATE accurately due to multicollinearity.
Assumption (4) states that the column space of the estimated confounder matrix should converge to the true column space with rate faster than $1/r$ to guanrantee consistency of the resulting ATE estimator. This suggests that when the true rank $r$ grows with dimensions, estimating ATE consistently requires stronger column space convergence than merely estimating the true column space consistently, i.e., $\angle(U, \hat{U}) \to 0$.

Now we prove that EFMC leads to accurate ATE estimator with high probability under some generative assumptions on confounder matrix $U$ as well as covariate loading matrix $V$.

\begin{assumption} [Latent Confounders and Covariate Loadings] \label{assump: confound}
$U$ and $V$ satisfy the following for some positive constants $\underline{v}$, $\overline{v}$, $c_V$ and $c_L$:
(1) for $i \in [N]$, $U_i$ are i.i.d Gaussian samples with covariance matrix $\Sigma_{r \times r} = LL^\top$ for some full rank matrix $L \in \mathbb{R}^{r \times r}$ such that $\frac{1}{\sqrt{r}}\|L\| < c_L$;
(2) $\underline{v} p \le \sigma_r^2(VL^\top) \le \sigma_1^2(VL^\top) \le \overline{v} p$ and $\frac{\max_j\|V_j\|}{\|V\|_F} \le \frac{c_V}{\sqrt{p}}$, $j = 1, ..., p$.
\end{assumption}

Assumption \ref{assump: confound} specifies a Gaussian random design for latent confounders, which implies assumption (2) in Theorem \ref{theorem: linear-regression} with high probability (Lemma 8, Appendix).
It also assumes without loss of generality that the latent confounders are not perfectly linearly correlated.
Moreover, Assumption \ref{assump: confound} exludes the degenerate case where almost all covariates have vanishing loadings on the latent confounders, i.e., $\frac{\max_j\|V_j\|}{\|V\|_F} \approx \frac{\max_j\|V_j\|}{\sqrt{n_V} \max_j\|V_j\|} = \frac{1}{\sqrt{n_V}}$ where $n_V$ is the numebr of covariates with nonvanishing loadings and $n_V$ scales much slower than $p$. In this case, the collected covariates are not informative enough for recoverying the latent confounders.

\begin{theorem} \label{theorem: exp}
Let $X_{ij}$ be sub-Exponential conditionally on $U_i$ with parameter $\sigma'$ for $\forall (i, j)$ and $T_i$ is almost surely not a linear combination of $U_i$. Suppose EFMC is used as the preprocessing step with $\lambda = 2c_0\sigma'\sqrt{Np}\sqrt{\frac{r\overline{N}\log\overline{N}}{|\Omega|}}$, where $\overline{N} = N \vee p$ and $|\Omega| > c_1r\overline{N}\log\overline{N}$ for positive constants $c_0$ and $c_1$ . Assume $r/N \to 0$ and $\exists \delta > 0$ such that $p^{1 + \delta}/N \to 0$. Under Assumption $1 - 5$, assumptions (2)-(4) in Theorem \ref{theorem: linear-regression} hold with high probability. Furthermore, $\exists$ positive constants $c_2$, $c_3$, $c_{\sigma', \eta}$ such that, the following holds with probability at least $1 - c_2\operatorname{exp}(-c_3N^{1/2}) - c_2N^{-1/2} - 2\operatorname{exp}(-c_3p^{\delta})$,
\begin{equation}\label{formula: error_bound}
|\hat{\tau} - \tau| \le \frac{Ac_Lc_{\sigma', \eta}c_V\sqrt{\frac{r^5\overline{r}\overline{N}\log\overline{N}}{|\Omega|}} -  \frac{\sigma}{N^{1/4}}[\sqrt{\frac{\underline{v}}{\underline{v} + 2\overline{v}}} - \Lambda(r, \overline{N}, |\Omega|)]}{[\sqrt{\frac{\underline{v}}{\underline{v} + 2\overline{v}}} - \Lambda(r, \overline{N}, |\Omega|)][\frac{1}{N}T^\top(I - P_U)T - 2\Lambda(r, \overline{N}, |\Omega|)]}
\end{equation}
where $\Lambda(r, \overline{N}, |\Omega|) = c_{\sigma', \eta}c_V\sqrt{\frac{\bar{r}r^3\overline{N}\log\overline{N}}{|\Omega|}}$ and $\overline{r} = \max\{r, \log\overline{N}\}$.
\end{theorem}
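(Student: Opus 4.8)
The plan is to deduce Theorem \ref{theorem: exp} from Theorem \ref{theorem: linear-regression} by (i) showing that hypotheses (2)--(4) of Theorem \ref{theorem: linear-regression} hold with high probability under Assumptions 1--5, and (ii) supplying an explicit high-probability rate for the subspace discrepancy $\angle(U,\hat U)$ driven by the EFMC reconstruction guarantee, and then substituting this rate into the bound of Theorem \ref{theorem: linear-regression}. The three quantities I must control are: the reconstruction error $\|\hat\Phi-\Phi\|$ of EFMC, the signal strength $\sigma_r(\Phi)$ of the true low-rank matrix, and their ratio, which governs $\angle(U,\hat U)$ through a perturbation bound.

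First I would invoke the EFMC reconstruction guarantee (Lemma 4, Appendix A): with the stated choice $\lambda=2c_0\sigma'\sqrt{Np}\sqrt{r\bar N\log\bar N/|\Omega|}$ and $|\Omega|>c_1 r\bar N\log\bar N$, the Frobenius error obeys $\|\hat\Phi-\Phi\|_F\lesssim \sigma'\sqrt{Np}\sqrt{r\bar N\log\bar N/|\Omega|}$ on an event of probability at least $1-2\exp(-c_3 p^\delta)$, where the max-norm constraint parameter and the incoherence bound $\max_j\|V_j\|/\|V\|_F\le c_V/\sqrt p$ of Assumption \ref{assump: confound} enter the constant. Next I would lower-bound the signal strength. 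Writing $U=ZL^\top$ with $Z\in\mathbb{R}^{N\times r}$ having i.i.d.\ standard Gaussian entries, we have $\Phi=Z(VL^\top)^\top$, so $\sigma_r(\Phi)\ge\sigma_r(Z)\,\sigma_r(VL^\top)$. Gaussian concentration (Lemma 8) gives $\sigma_r(Z)\gtrsim\sqrt N$ since $r/N\to0$, and Assumption \ref{assump: confound} gives $\sigma_r^2(VL^\top)\ge\underline v p$, whence $\sigma_r(\Phi)\gtrsim\sqrt{\underline v Np}$; the companion bound $\sigma_1(\Phi)\lesssim\sqrt{\overline v Np}$ controls the conditioning and is the source of the ratio $\sqrt{\underline v/(\underline v+2\overline v)}$ appearing in (\ref{formula: error_bound}). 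A Davis--Kahan/Wedin $\sin\Theta$ bound in the form of \citep{cai2018rate}, together with $\|\hat\Phi-\Phi\|\le\|\hat\Phi-\Phi\|_F$ and $\sigma_{r+1}(\Phi)=0$, then yields $\angle(U,\hat U)\lesssim\|\hat\Phi-\Phi\|/\sigma_r(\Phi)\lesssim \Lambda(r,\bar N,|\Omega|)$; the extra factors of $r$ and $c_V$ relative to the naive ratio are tracked through the incoherence constant and the singular-value ratios.

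With the angle rate in hand I would verify the hypotheses of Theorem \ref{theorem: linear-regression}. Hypothesis (2), $\|U\|/\sqrt{Nr}\lesssim c_L$, follows from $\|U\|\le\|Z\|\|L\|$, the Gaussian bound $\|Z\|\lesssim\sqrt N$, and $\|L\|<\sqrt r\,c_L$ (Lemma 8). Hypothesis (3) is exactly the assumption that $T_i$ is a.s.\ not a linear combination of $U_i$ (Lemma 7). Hypothesis (4), $r\angle(U,\hat U)\to0$, follows since $r\Lambda\asymp\sqrt{r^5\bar r\bar N\log\bar N/|\Omega|}\to0$ under $r/N\to0$, $p^{1+\delta}/N\to0$, and $|\Omega|>c_1 r\bar N\log\bar N$. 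I would then substitute $\angle(U,\hat U)\le\Lambda$ into the bound of Theorem \ref{theorem: linear-regression}, using $\|T\|^2\le N$ (as $T_i\in\{0,1\}$) so $\|T\|/\sqrt N\le1$, $\|\alpha\|\le\sqrt r\|\alpha\|_{\max}\le\sqrt r A$, and $\|U\|/\sqrt{Nr}\lesssim c_L$; this converts the numerator's bias term into $A c_L c_{\sigma',\eta}c_V\sqrt{r^5\bar r\bar N\log\bar N/|\Omega|}$ and the $\tfrac2N\|T\|^2\angle$ term into $2\Lambda$, while the noise term $\tfrac{\sigma}{N^{1/4}}$ is carried through. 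A union bound over the EFMC event, the Gaussian-design events, and the sub-Gaussian noise concentration event of Theorem \ref{theorem: linear-regression} (set at level $\exp(-cN^{1/2})$, which produces the $N^{1/4}$ scaling) gives the stated failure probability $c_2\exp(-c_3N^{1/2})+c_2N^{-1/2}+2\exp(-c_3p^\delta)$.

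The main obstacle is the passage from the Frobenius reconstruction error that EFMC naturally controls to a subspace ($\sin\Theta$) bound on $\angle(U,\hat U)$ with the correct dependence on the rank $r$ and the incoherence constant $c_V$. This requires (a) a sharp lower bound on the smallest nonzero singular value $\sigma_r(\Phi)$ of the random low-rank signal, obtained by combining Assumption \ref{assump: confound} with non-asymptotic Gaussian singular-value concentration, and (b) careful bookkeeping of how the max-norm constraint and incoherence enter the EFMC constant, since these are what inflate the naive rate $\sqrt{r\bar N\log\bar N/|\Omega|}$ to $\Lambda\asymp c_V\sqrt{\bar r r^3\bar N\log\bar N/|\Omega|}$. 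A secondary subtlety is that $\hat\Phi$ may have rank $\hat r\neq r$, so the perturbation bound must be stated for the comparison of the top-$r$ and top-$\hat r$ left singular subspaces via the $\sigma_{r\wedge\hat r}$ definition of $\angle$.
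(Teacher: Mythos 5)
Your proposal is correct and, at the architectural level, matches the paper's proof: both reduce Theorem~\ref{theorem: exp} to Theorem~\ref{theorem: linear-regression} by verifying hypotheses (2)--(4) with high probability, using the EFMC reconstruction guarantee (Lemma~\ref{lemma: exp_family}), a Wedin $\sin\Theta$ perturbation bound to pass from $\|\hat\Phi-\Phi\|_F$ to $\angle(U,\hat U)$ (Lemma~\ref{lemma: exp_column_space}), the spikiness bound $\alpha_{sp}(\Phi)\le c'c_V\sqrt{\overline{r}}$ (Lemma~\ref{lemma: spikeness}, which is where the $c_2N^{-1/2}$ failure term actually originates via the union bound over $\max_i\|U_i\|$), Gaussian operator-norm concentration for $\frac{1}{\sqrt{Nr}}\|U\|$ (Lemma~\ref{lemma: bound_U}), and matrix Cauchy--Schwarz for hypothesis (3) (Lemma~\ref{lemma: quadratic}). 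The one genuinely different step is your control of $\sigma_r(\Phi)/\sigma_1(\Phi)$: the paper's Lemma~\ref{lemma: sv_ratio} shows $\frac{1}{N}VU^\top UV^\top\approx VL^\top LV^\top$ by an $\epsilon$-net over $\mathcal{S}^{p-1}\cap\operatorname{Null}^\perp(V)$ plus Bernstein's inequality, and this is precisely the source of both the hypothesis $p^{1+\delta}/N\to 0$ and the failure term $2\exp(-c_3p^{\delta})$; you instead factor $\Phi$ through a standard Gaussian matrix $Z$ and use $\sigma_r(AB)\ge\sigma_r(A)\sigma_r(B)$ with nonasymptotic singular-value concentration for $Z$, obtaining $\sigma_r(\Phi)/\sigma_1(\Phi)\ge(1-o(1))\sqrt{\underline{v}/\overline{v}}\ge\sqrt{\underline{v}/(\underline{v}+2\overline{v})}$ with exponentially small failure probability and with no constraint tying $p$ to $N$ for this step --- a simpler and slightly stronger route (one transpose caveat: with $U_i\sim\mathcal{N}(0,LL^\top)$ one has $\Phi=ZL^\top V^\top$, whose nonzero singular values are those of $VL$ rather than $VL^\top$; taking $L=\Sigma^{1/2}$ symmetric, as the paper implicitly does, reconciles this). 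Two bookkeeping corrections: the $2\exp(-c_3p^{\delta})$ term does \emph{not} come from the EFMC event as you assert --- Lemma~\ref{lemma: exp_family} fails with probability $4e^{-2\log^2\overline{N}}+e^{-2\log\overline{N}}$, absorbed into the $c_2N^{-1/2}$ term --- it comes from Lemma~\ref{lemma: sv_ratio}, and under your alternative route it would vanish altogether, which is harmless since a smaller failure probability implies the stated one; and hypothesis (4), $r\angle(U,\hat U)\to 0$, requires $r^5\overline{r}\,\overline{N}\log\overline{N}/|\Omega|\to 0$ and does not follow from $|\Omega|>c_1r\overline{N}\log\overline{N}$ together with the dimension conditions alone as you claim, though the paper's own statement shares this looseness and the displayed bound (\ref{formula: error_bound}) is what is actually proved.
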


The assumption that $X_{ij}$ is sub-Exponential encompasses common exponential family distributions like Gaussian, Bernoulli, Poisson, Binomial, etc.
The assumption that $p^{1 + \delta}/N \to 0$ appears as an artifact of proof and our simulation shows that the consistency also holds when $N < p$ (Figure $3$, Appendix B).
Theorem \ref{theorem: exp} guarantees that the ATE estimator induced by EFMC is consistent as long as $r^5\overline{r}\overline{N}\log\overline{N}/|\Omega| \to 0$ when $N, p \to \infty$. This seems much more restrictive than consistent matrix reconstruction that merely requires $r\overline{N}\log\overline{N}/|\Omega| \to 0$ (Lemma 4, Appendix A). However, this is due to the pessimistic nature of the error bound. Our simulations in Section 4.1 show that matrix factorization works very well for $r = 5$, $N = 1500$ and $p = 1450$ such that $r^6 \gg N$.

\section{Numerical results}

In this part, we illustrate the effectiveness of low rank matrix factorization in alleviating the ATE estimation error caused by measurement noise using synthetic datasets with both continuous and binary covariates and the twins dataset introduced by Louizos et al. \citep{louizos2017causal}. For the implementation of matrix factorization, we use the following nonconvex formulation:
\begin{equation} \label{formula: glrm}\textstyle
\hat{U}, \hat{V} = \underset{{U \in \mathbb{R}^{N \times k}, V \in \mathbb{R}^{p \times k}}}{\operatorname{argmin}} \sum_{(i, j) \in \Omega} L_{i, j}(X_{ij}, U_{i}^\top V_j) +  \frac{\lambda}{2}(\| U\|_F + \| V \|_F)
\end{equation}
where $L_{ij}$ is a loss function assessing how well $U_{i}^\top V_j$ fits the observation $X_{ij}$ for $(i, j) \in \Omega$. The solution $\hat{U}$ can be viewed as the estimated confounder matrix.
This nonconvex formulation (\ref{formula: glrm}) is proved to equivalently recover the solution of the convex formulation (\ref{formula: exp_family}) when log-likelihood loss functions and sufficient large $k$ are used \citep{udell2016generalized,kallus2016dynamic}.
Solving the nonconvex formulation (\ref{formula: glrm}) approximately is usually much faster than solving the convex counterpart.
In our experiments, we use the the R package softImpute \citep{hastie2015matrix} when dealing with continuous covariates and quadratic loss, the R package logisticPCA \citep{collins2002generalization} when dealing with binary covariates and logistic loss, and the Julia package LowRankModel \citep{udell2016generalized} when dealing with categorical variables and multinomial loss.
 All tuning parameters are chosen via $5$-fold cross-validation.

\subsection{Synthetic experiment}
We generate synthetic samples according to the following linear regression process:
$Y_i \mid U_i, T_i \sim \mathcal{N}(\alpha^\top U_i + \tau T_i, 1)$ where confounder $U_{ij} \sim \mathcal{N}(0, 1)$ and treatment variable $T_i \mid U_i \sim \operatorname{Bernoulli}(\sigma(\beta^\top U_i))$ for $i \in [N]$, $j \in [r]$.
We consider covariates generated from both indepedent Gaussian noise and independent Bernoulli noise: $X_{ij} \sim \mathcal{N}(U_i^\top V_j, 5)$ and $X_{ij} \sim \operatorname{Bernoulli}(\sigma(U_i^\top V_j))$ for $V_j \in \mathbb{R}^r$.
We set the dimension of the latent confounders $r = 5$, use $\alpha = [-2, 3, -2, -3, -2]$ and $\beta = [1, 2, 2, 2, 2]$, and choose $\tau = 2$ in our example. (But our conclusion is robust to different values of these parameter.)
We consider low dimensional case where the number of covariates $p$ varies from $100$ to $1000$ and the sample size $N = 2p$ and high dimensional case where $p$ varies from $150$ to $1500$ and $N = p + 50$.
For each dimensional setting, we compute the error metrics based on $50$ replications of the experiments and we generate entries of $V$ independently from standard normal distribution with $V$ fixed across the replications.

\begin{figure} \label{figure: synthetic}
  \centering
    \includegraphics[width = \linewidth]{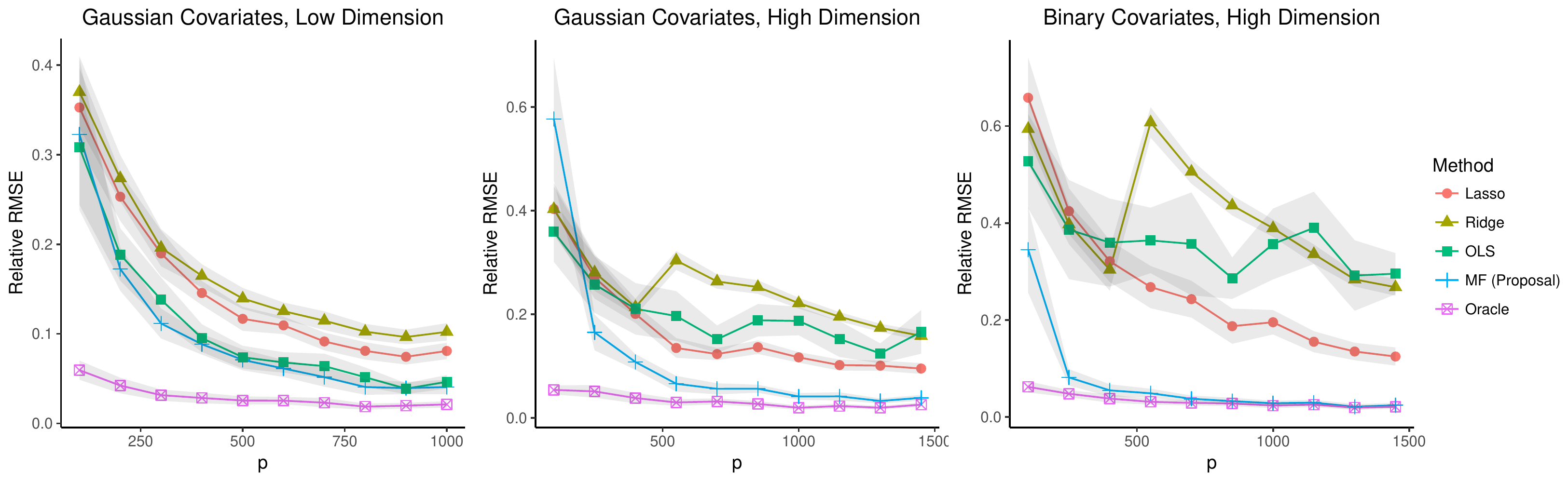}
  \caption{Results from experiments on synthetic data. }
\end{figure}

We compare the root mean squared error (RMSE) scaled by the true ATE in Figure \ref{figure: synthetic} for the following five ATE estimators in linear regression:
the Lasso, Ridge and OLS estimators from regressing $Y_i$ on $T_i$ and noisy covariates $X_i$, the OLS estimator from regressing $Y_i$ on $T_i$ and the estimated confounders $\hat{U}_i$ from matrix factorization (MF), and the OLS estimator from regressing $Y_i$ on $T_i$ and the true confounders $U_i$ (Oracle). The shaded area corresponds to the $2$-standard-deviation error band for the estimated relative RMSE across $50$ replications.

Figure $1$ shows that OLS leads to accurate ATE estimation for Gaussian additive noise when the number of covariates is sufficiently large, which is consistent with Corollary \ref{corollary: bias}.
However, for high dimensional data, matrix factorization preprocessing dominates all other feasible methods and its RMSE is very close to the oracle regression for sufficiently large number of covariates.
While all feasible methods tend to have better performance when more covariates are available, matrix factorization preprocessing is the most effective in exploiting the noisy covariates for accurate causal inference.
Sufficiently many noisy covariates are very important for accurate ATE estimation in the presence of measurement noise. We can show that the error does not converge when only $N$ grows but $p$ is fixed (Figure $5$, Appendix B).
With only a few covariates, matrix factorization preprocessing may have high error because the cross-validation chooses rank smaller than the ground truth.
Furthermore, the gain from using matrix factorization is more dramatic for binary covariates, which demonstrates the advantage of matrix factorization preprocessing with loss functions adapting to the data types.
More numerical results on different dimensional settings and missing data can be found in Appendix.

\subsection{Twin mortality}
We further examine the effectiveness of matrix factorization preprocessing using the twins dataset introduced by Louizos et al. \citep{louizos2017causal}.
This dataset includes information for $N = 11984$ pairs of twins of same sex who were born in the USA between 1998-1991 and weighted less than $2$kg.
For the $i^{\text{th}}$ twin-pair, the treatment variable $T_i$ corresponds to being the heavier twin and the outcomes $Y_i(0), Y_i(1)$ are the mortality in the first year after they were born.
We have outcome records for both twins and view them as two potential outcomes for the treatment variable. Therefore, the $-2.5\%$ difference between the average mortality rate of heavier twins and that of ligher twins can be viewed as the "true" ATE.
This dataset also includes other $46$ covariates relating to the parents, the pregnancy and birth for each pair of twins. More details about the dataset can be found in Louizos et al. \citep{louizos2017causal}.

\begin{figure}\label{figure: twins}
  \centering
  \includegraphics[width = \linewidth]{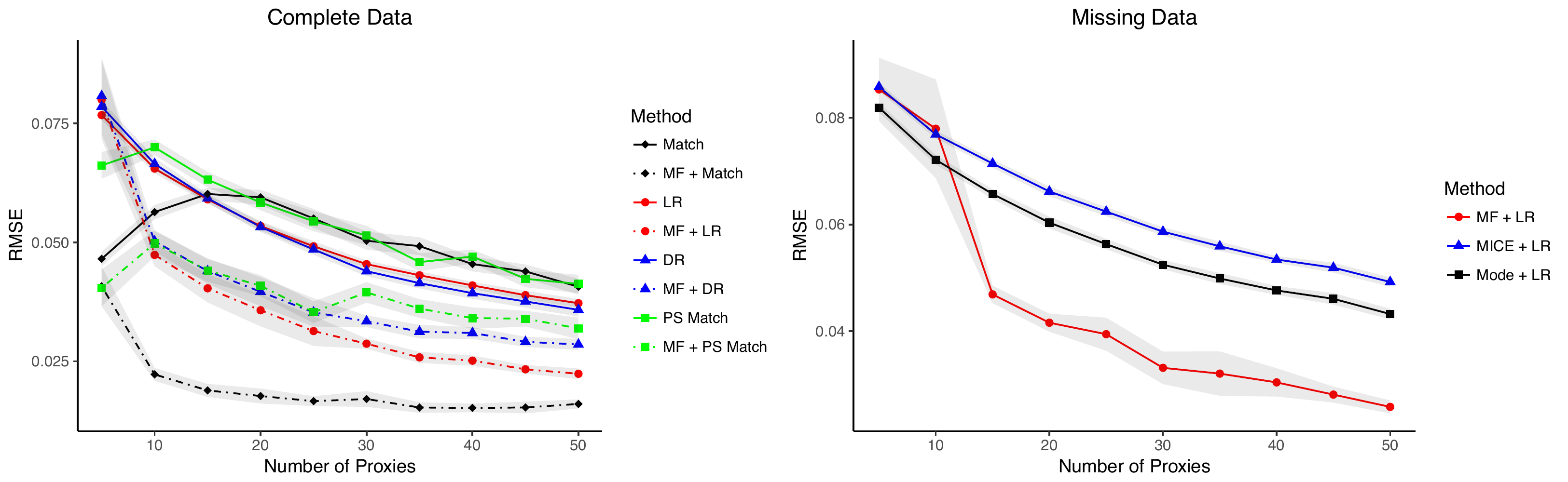}
  \caption{Results on the twins dataset.}
\end{figure}

To simulate confounders in observational studies, we follow the practice in Louizos et al. \citep{louizos2017causal} and selectively hide one of the two twins based on one variable highly correlated with the outcome: GESTAT10, the number of gestation weeks prior to the birth.
This is an ordinal variable with values from $0$ to $9$ indicating less than $20$ gestation weeks, $20 - 27$ gestation weeks and so on. We simulate $T_i \mid U_i \sim \operatorname{Bernoulli}(\sigma(5(U_i/10 - 0.1)))$, where $U_i$ is the confounder GESTAT10.
Then for each twin-pair, we only observe the lighter twin if $T_i = 0$ and the heavier twin otherwise.
We create noisy proxies for the confounder as follows: we replicate the GESTAT10 $p$ times and independently perturb the entries of these $p$ copies with probability $0.5$.
Each perturbed entry is assigned with a new value sampled from $0$ to $9$ uniformly at random.
We denote these proxy variables as $\{X_i\}_{i=1}^N$.
We also consider the presence of missing values: we set each entry as missing value independently with probability $0.3$.
We vary $p$ from $5$ to $50$ and for each $p$ we repeat the experiments $20$ times for computing error metrics.

We compare the performance of different methods for both complete data and missing data in Figure \ref{figure: twins}.
For complete data, we consider logistic regression (LR), doubly robust estimator (DR), Mahalanobis matching (Match) and propensity score matching (PS Match) using $\{X_i\}_{i=1}^N$, and their counterparts using the estimated confounders $\{\hat{U}_i\}_{i=1}^N$ from matrix factorization. All propensity scores are estimated by logistic regression using $\{X_i\}_{i=1}^N$ or $\{\hat{U}_i\}_{i=1}^N$ accordingly. The matching methods are implemented via the full match algorithm in the R package optmatch \citep{fullmatch}.
For missing data, we consider logistic regression using data output from different preprocessing method: imputing missing values by column-wise mode, multiple imputation using the R package MICE  with $5$ repeated imputations \citep{mice}, and the estimated confounders $\{\hat{U}_i\}_{i=1}^N$ from matrix factorization.

We can observe that all methods that use matrix factorization clearly outperform their counterparts that do not, even though the noise mechanism does not obey common noise assumptions in matrix factorization literature. This also demonstrates that matrix factorization preprocessing can augment popular causal inference methods beyond linear regression. Furthermore, matrix factorization preprocessing is robust to a considerable amount of missing values and it dominates both the ad-hoc mode imputation method and the state-of-art multiple imputation method. This suggests that inferring the latent confounders is more important for causal inference than imputing the noisy covariates.

\section{Conclusion}
In this paper, we address the problem of measurement noise prevalent in causal inference.
We show that with a large number of noisy proxies, we can reduce the bias
resulting from measurement noise by using matrix factorization preprocessing to infer latent confounders.
We guarantee the effectiveness of this approach in a linear regression setting, and show its effectiveness numerically on both synthetic and real clinical datasets.
These results demonstrate that preprocessing by matrix factorization to infer latent confounders
has a number of advantages: it can accommodate a wide variety of data types,
ensures robustness to missing values, and
can improve causal effect estimation when used in conjunction
with a wide variety of causal inference methods.
As such, matrix factorization allows more principled and accurate
estimation of causal effects from observational data.

\bibliographystyle{unsrt}
\medskip
\bibliography{paper}

\newpage

\appendix
\section{Proofs}
\subsection{Measurement Noise and Bias}

\begin{proof}[Proof of Proposition 1]
\begin{equation}\label{formula: ols}
\hat{\tau} =  [\frac{1}{N}T^\top(I - P_{X})T]^{-1}[\frac{1}{N}T^\top(I - P_{X})Y]
\end{equation} 
By Law of Large Number,
\begin{equation*}
 \frac{1}{n}T^\top X \to \expect[T_i (U_iV^\top + W_i)] = \expect(T_iU_i)V^\top
\end{equation*}
\begin{equation*}
 \frac{1}{N}X^\top Y \to \expect[(U_iV^\top + W_i)^\top (U_i\alpha + \tau T_i + \epsilon_i)] = V \expect[U^\top_i U_i]\alpha + \tau V \expect(U^\top_iT_i)
\end{equation*}
\begin{equation*}
\frac{1}{N} T^\top Y \to \expect[T_i(U_i \alpha + \tau T_i + \epsilon_i)] = \expect(T_iU_i)\alpha + \tau \expect(T_i^2)
\end{equation*}
\begin{equation*}
 (\frac{1}{N}X^\top X)^{-1} \to [\expect(V U^\top_iU_iV^\top + {W^\top_iU_iV^\top} + {V U^\top_iW_i} + {W^\top_iW_i})]^{-1} = [V\expect(U^\top_iU_i)V^\top + \sigma^2_{w}I_{r \times r}]^{-1} 
\end{equation*}
By Sherman–Morrison–Woodbury formula, 
\begin{align*}
[V\expect(U^\top_iU_i)V^\top + \sigma^2_{w}I_{r \times r}]^{-1} &= \frac{1}{\sigma_w^2}I - \frac{1}{\sigma_w^2}V[(\frac{1}{\sigma^2_w}\expect U^\top_iU_i)^{-1} + V^\top V ]^{-1}V^\top \\
\end{align*}
and 
\begin{equation*}
[(\frac{1}{\sigma^2_w}\expect U^\top_iU_i)^{-1} + V^\top V ]^{-1} = (V^\top V)^{-1} - (V^\top V)^{-1}[(V^\top V)^{-1} + \frac{1}{\sigma^2_w}\expect U^\top_iU_i]^{-1}(V^\top V)^{-1}
\end{equation*}
Plug these terms back in \ref{formula: ols}, 
\begin{align*}
\frac{1}{N} T^\top(I - P_X)Y &= \tau \expect(T_i^2) + \expect(T_iU_i)\alpha \\
&+  \expect(T_iU_i)V^\top[V(\expect U^\top_iU_i)^{-1}V^\top + \sigma^2_wI]^{-1}V \expect U^\top_iU_i\alpha \\
&+  \tau \expect(T_iU_i)V^\top[V(\expect U^\top_iU_i)^{-1}V^\top + \sigma^2_wI]^{-1}V^top \expect U^\top_iT_i \\
&= \tau \expect(T_i^2) + \expect(T_iU_i)\alpha \\
&+  \frac{1}{\sigma^2_w}\expect(T_iU_i)\{V^\top V[(\frac{1}{\sigma^2_w}\expect U^\top _iU_i)^{-1} + V^\top V]^{-1}V^\top V - V^\top V\}\expect(U^\top_iU_i)\alpha \\
&+ \frac{\tau}{\sigma^2_w}\expect(T_iU_i)\{V^\top V[(\frac{1}{\sigma^2_w}\expect U^\top _iU_i)^{-1} + V^\top V]^{-1}V^\top V - V^\top V\}\expect(U^\top_iT_i) \\
&= \tau \expect(T_i^2) + \expect(T_iU_i)\alpha - \frac{1}{\sigma^2_w}\expect(T_iU_i)\{\frac{1}{\sigma^2_w}\expect(U^\top_iU_i) + (V^\top V)^{-1}\}^{-1}E(U^\top_iU_i)\alpha \\
&- \frac{\tau}{\sigma^2_w}E(T_iU_i)\{\frac{1}{\sigma^2_w}E(U^\top_iU_i) + (V^\top V)^{-1}\}^{-1}E(U^\top_iT_i)
\end{align*}
Similarly,
\begin{align*}
\frac{1}{N} T^\top(I - P_X)T  &\to \expect(T_i^2) - \expect(T_iX_i)(\expect X^\top_iX_i)^{-1}\expect(X^\top_i T_i) \\
&= \expect(T_i^2) - \frac{1}{\sigma^2_w}\expect(T_iU_i)\{\frac{1}{\sigma^2_w}\expect(U^\top_iU_i) + (V^\top V)^{-1}\}^{-1}\expect(U^\top_iT_i) 
\end{align*}
Therefore, 
\begin{align*}
\hat{\tau} - \tau &\to \frac{ \expect(T_iU_i)\alpha - \frac{1}{\sigma^2_w}\expect(T_iU_i)\{\frac{1}{\sigma^2_w}\expect(U^\top_iU_i) + (V^\top V)^{-1}\}^{-1}E(U^\top_iU_i)\alpha}{\expect(T_i^2) - \frac{1}{\sigma^2_w}\expect(T_iU_i)\{\frac{1}{\sigma^2_w}\expect(U^\top_iU_i) + (V^\top V)^{-1}\}^{-1}\expect(U^\top_iT_i) } \\
&= \frac{\mathbb{E}(T_iU_i)\mathbb{E}(U^\top_iU_i)^{-1}[\frac{1}{\sigma_w^2} V^\top V + \mathbb{E}(U^\top_iU_i)^{-1}]^{-1}\alpha}{\mathbb{E}(T_i^2) - \mathbb{E}(T_iU_i)[(\frac{1}{\sigma_w^2}V^\top V)^{-1} + \mathbb{E}(U^\top_iU_i)]^{-1}\mathbb{E}(U^\top_iT_i)}
\end{align*}
The last equality once again follows from Sherman–Morrison–Woodbury formula.
\end{proof}

\begin{proof}[Proof for Corollary \ref{corollary: bias}]
$\|V^\top V\| = \|V\|^2  \to \infty$ so $\|[\frac{1}{\sigma_w^2} V^\top V + \mathbb{E}(U^\top_iU_i)^{-1}]^{-1}\| \to 0$. On the other hand, by Sherman–Morrison–Woodbury formula,  
\begin{align*}
& \quad \quad \quad \quad \quad \quad \quad \quad \quad \quad \mathbb{E}(T_iU_i)[(\frac{1}{\sigma_w^2}V^\top V)^{-1}\mathbb{E}(U^\top_iU_i)]^{-1}\mathbb{E}(U^\top_iT_i) \\
&= \mathbb{E}(T_iU_i)\mathbb{E}(U^\top_iU_i)^{-1}\mathbb{E}(U^\top_iT_i) - \mathbb{E}(T_iU_i)\mathbb{E}(U^\top_iU_i)^{-1}[\mathbb{E}(U^\top_iU_i) + \frac{1}{\sigma_w^2}V^\top V]^{-1}\mathbb{E}(U^\top_iU_i)^{-1}\mathbb{E}(U^\top_iT_i)
\end{align*}
So the denominator term satisfies that 
\begin{equation*}
\mathbb{E}(T_i^2) -  \mathbb{E}(T_iU_i)[(\frac{1}{\sigma_w^2}V^\top V)^{-1} + \mathbb{E}(U^\top_iU_i)]^{-1}\mathbb{E}(U^\top_iT_i) > \mathbb{E}(T_i^2) - \mathbb{E}(T_iU_i)[\mathbb{E}(U^\top_iU_i)]^{-1}\mathbb{E}(U^\top_iT_i)
\end{equation*}
which is bounded away from $0$ by Lemma 7. Therefore, the asymptotic bias term (\ref{formula: bias}) diminishes to $0$. 
\end{proof}

\begin{proposition}\label{prop: equivalence}
Given the true $V$, $U_i$ can be estimated by the OLS estimator for the following linear regression: for $j = 1, \dots, p$
\begin{equation*}
X_{ij} = V_jU_i^\top + \eta_{ij}.
\end{equation*}
Namely we regress $X^\top_i$ on the design matrix $V$ to estimate $U^\top_i$. The resulting confounder estimator is $\tilde{U} = XV(V^\top V)^{-1}$. The subsequent OLS estimator for ATE based on $\tilde{U}$, $Y$ and $T$ is denoted as $\hat{\tau}$. Under the assumptions in Proposition 1,  $\tilde{\tau}$ has the same asymptotic bias as in Proposition 1.
\end{proposition}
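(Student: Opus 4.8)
The plan is to reduce the claim to the computation already carried out in the proof of Proposition 1, by showing that the recovered confounder $\tilde U$ behaves asymptotically like the true confounder $U$ corrupted by additive noise whose covariance is exactly $\sigma_w^2(V^\top V)^{-1}$. First I would substitute the additive noise model $X = UV^\top + W$ into the reconstruction formula to obtain
\[
\tilde U = XV(V^\top V)^{-1} = UV^\top V(V^\top V)^{-1} + WV(V^\top V)^{-1} = U + \tilde W, \qquad \tilde W := WV(V^\top V)^{-1}.
\]
Writing rows as $\tilde U_i = U_i + \tilde W_i$ with $\tilde W_i = W_{i\cdot}V(V^\top V)^{-1}$, I would record three facts: since $r,p$ are fixed, $V$ (hence $(V^\top V)^{-1}$) is a fixed matrix, so the $\tilde W_i$ are i.i.d.\ with finite second moments and the Law of Large Numbers applies; $\tilde W_i$ has mean zero and is independent of $U_i$ (because $W$ is independent of $\{U_i\}$), hence independent of both $T_i$ and $\epsilon_i$; and its second moment is $\mathbb{E}(\tilde W_i^\top \tilde W_i) = (V^\top V)^{-1}V^\top(\sigma_w^2 I)V(V^\top V)^{-1} = \sigma_w^2(V^\top V)^{-1}$.

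Next I would write the estimator $\tilde\tau$ of the proposition in the partialled-out (Frisch--Waugh--Lovell) form $\tilde\tau = [\tfrac{1}{N}T^\top(I-P_{\tilde U})T]^{-1}[\tfrac{1}{N}T^\top(I-P_{\tilde U})Y]$, exactly as in (\ref{formula: ols}), and take limits of each cross-moment by the Law of Large Numbers. Using the three facts above, the relevant limits are $\tfrac{1}{N}T^\top\tilde U \to \mathbb{E}(T_iU_i)$ (the $\tilde W$ contribution vanishes by independence), $\tfrac{1}{N}\tilde U^\top\tilde U \to \mathbb{E}(U_i^\top U_i) + \sigma_w^2(V^\top V)^{-1}$, and $\tfrac{1}{N}\tilde U^\top Y \to \mathbb{E}(U_i^\top U_i)\alpha + \tau\mathbb{E}(U_i^\top T_i)$, while $\tfrac{1}{N}T^\top Y$ and $\tfrac{1}{N}T^\top T$ have the same limits as in Proposition 1. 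Substituting and cancelling the terms proportional to $\tau$ in the numerator leaves
\[
\tilde\tau - \tau \;\longrightarrow\; \frac{\mathbb{E}(T_iU_i)\alpha - \mathbb{E}(T_iU_i)[\mathbb{E}(U_i^\top U_i) + \sigma_w^2(V^\top V)^{-1}]^{-1}\mathbb{E}(U_i^\top U_i)\alpha}{\mathbb{E}(T_i^2) - \mathbb{E}(T_iU_i)[\mathbb{E}(U_i^\top U_i) + \sigma_w^2(V^\top V)^{-1}]^{-1}\mathbb{E}(U_i^\top T_i)}.
\]

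Finally I would match this to the intermediate expression obtained in the proof of Proposition 1 using the elementary identity $[\mathbb{E}(U_i^\top U_i) + \sigma_w^2(V^\top V)^{-1}]^{-1} = \tfrac{1}{\sigma_w^2}[\tfrac{1}{\sigma_w^2}\mathbb{E}(U_i^\top U_i) + (V^\top V)^{-1}]^{-1}$, obtained by factoring out the scalar $\sigma_w^2$. This yields term-by-term agreement with the limit of $\hat\tau - \tau$ in Proposition 1, and a further application of the Sherman--Morrison--Woodbury formula (as at the end of that proof) recovers the form (\ref{formula: bias}). The main thing to be careful about is the independence bookkeeping in the second step: I must verify that every cross term pairing $\tilde W_i$ with $T_i$, $\epsilon_i$, or $U_i$ vanishes, which is precisely what forces the off-diagonal noise contributions to drop out and makes $\tilde\tau$ inherit exactly the bias of $\hat\tau$. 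It is worth noting that although $\operatorname{col}(\tilde U) \subseteq \operatorname{col}(X)$ with strictly fewer columns, so that $P_{\tilde U} \neq P_X$ and the two estimators differ in finite samples, their asymptotic biases nonetheless coincide.
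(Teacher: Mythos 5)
Your proposal is correct and follows essentially the same route as the paper's proof: write $\tilde{U} = U + WV(V^\top V)^{-1}$, express $\tilde{\tau}$ in partialled-out form, and compute the Law-of-Large-Numbers limits of the cross-moments, which match the intermediate expressions in the proof of Proposition 1 after factoring out $\sigma_w^2$ (the paper verifies this term-by-term, treating $\frac{1}{N}T^\top P_{\tilde{U}}U\alpha$ as a representative example). If anything, your write-up is slightly more complete, since you record explicitly that $\mathbb{E}(\tilde{W}_i^\top \tilde{W}_i) = \sigma_w^2(V^\top V)^{-1}$ and track all cross terms rather than one exemplar.
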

\begin{proof}[Proof of Proposition 2]
In this case, $\tilde{U} = U + WV(V^\top V)^{-1}$. Moreover, $\tilde{\tau}$ has the following form:
\begin{align*}
\hat{\tau} &= [\frac{1}{N}T^\top(I - P_{\tilde{U}})T]^{-1}[\frac{1}{N}T^\top(I - P_{\tilde{U}})Y] \\
&= [\frac{1}{N}T^\top(I - P_{\tilde{U}})T]^{-1}[\frac{1}{N}T^\top(I - P_{\tilde{U}})U\alpha + \frac{\tau}{N}T^\top(I - P_{\tilde{U}})T + \frac{1}{N}T^\top(I - P_{\tilde{U}})\epsilon]
\end{align*}
Take $\frac{1}{N}T^\top P_{\tilde{U}}U\alpha$ as an example. 
\begin{equation*}
 \frac{1}{N}T^\top P_{\tilde{U}}U\alpha =  \frac{1}{N}T\tilde{U}(\frac{1}{N}\tilde{U}^\top \tilde{U})^{-1}\frac{1}{N}\tilde{U}^\top U\alpha
\end{equation*}
where
\begin{equation*}
\frac{1}{N}T\tilde{U} = \frac{1}{N}T^\top [U + WV(V^\top V)^{-1}] \to \expect T_iU_i
\end{equation*}
\begin{align*}
(\frac{1}{N}\tilde{U}^\top \tilde{U})^{-1} &= \{\frac{1}{N}[U + WV(V^\top V)^{-1}]^\top[U + WV(V^\top V)^{-1}]\}^{-1} \\
&\to \frac{1}{\sigma_w^2} [\frac{1}{\sigma_w^2} \expect U_i^\top U_i + (V^\top V)^{-1}]^{-1}
\end{align*}
\begin{equation*}
\frac{1}{N}\tilde{U}^\top U\alpha = \frac{1}{N}[U + WV(V^\top V)^{-1}]^\top U\alpha \to \expect U_i^\top U_i\alpha
\end{equation*}
Therefore
\begin{align*}
 \frac{1}{N}T^\top P_{\tilde{U}}U\alpha &\to \frac{1}{\sigma_w^2} \expect T_iU_i[\frac{1}{\sigma_w^2} \expect U_i^\top U_i + (V^\top V)^{-1}]^{-1}\expect U_i^\top U_i\alpha
\end{align*}
which is exactly equal to the limit of $\frac{1}{N}T^\top P_{X}U\alpha$ in the proof of Proposition 1. The equivalence of other terms can be verified similarly.
\end{proof}

\subsection{Proof of Theorem \ref{theorem: linear-regression}}
\begin{proof}[Proof of Theorem \ref{theorem: linear-regression}]
The error of the ATE estimator in the linear regression can be written as:
\begin{equation} \label{formula: error}
\hat{\tau} - \tau = [\frac{1}{N}T^\top(I - P_{\hat{U}})T]^{-1}[\frac{1}{N}T^\top(I - P_{\hat{U}})U]\alpha + [\frac{1}{N}T^\top(I - P_{\hat{U}})T]^{-1}[\frac{1}{N}T^\top(I - P_{\hat{U}})\epsilon]
\end{equation} 

We first bound $\frac{1}{N}[T^\top(I - P_{\hat{U}})U]\alpha$:
\begin{align*}
\frac{1}{N}|T^\top(I - P_{\hat{U}})U\alpha| &= \frac{1}{N}|T^\top(I - P_{\hat{U}})U\alpha - T^\top(I - P_{{U}})U\alpha| \\
&= \frac{1}{N}|T^\top(P_U - P_{\hat{U}})U\alpha| \\
&\le \frac{1}{\sqrt{N}}\|T\|\frac{1}{\sqrt{N}}\|U\alpha\|\|P_{U} - P_{\hat{U}}\| \\
&\le (\frac{2}{\sqrt{N}}\|T\|)(\frac{A}{\sqrt{Nr}}\|U\|)(r\angle(\hat{U}, U))
\end{align*}
The first equaility follows from $(I - P_U)U = 0$. The last inequality follows from Lemma \ref{lemma: space_dist}.

We then bound $[T^\top(I - P_{\hat{U}})T]$:
\begin{align*}
\frac{1}{N}|T^\top(I - P_{\hat{U}})T| &= \frac{1}{N}|T^\top(I - P_{U})T + T^\top(P_U - P_{\hat{U}})T| \\
&\ge \frac{1}{N}T^\top(I - P_{U})T - |\frac{1}{N}T^\top(P_U - P_{\hat{U}})T| \\
&\ge \frac{1}{N}T^\top(I - P_{U})T - \frac{2}{N}\|T\|^2\angle\Theta(U, \hat{U}) \\
\end{align*}

Furthermore, we can bound $\frac{1}{N}|T^\top(I - P_{\hat{U}})\epsilon|$: $T^\top(I - P_{\hat{U}})\epsilon$ is sub-Gaussian with mean $0$ and variance $\sigma^2T^\top(I - P_{\hat{U}})T$. By Hoeffding bound, for any $t > 0$ and some constant $c >0$,
\begin{equation*}
P(\frac{1}{n}|T^\top(I - P_{\hat{U}})\epsilon| \ge t) \le 2e^{-\frac{cN^2t^2}{\sigma^2T^\top(I - P_{\hat{U}})T}} \le 2e^{-\frac{cNt^2}{\sigma^2}} 
\end{equation*}
Take $t = \frac{\sigma}{N^{1/4}}$, then $\frac{1}{N}|T'(I - P_{\hat{U}})\epsilon| \le \frac{\sigma}{N^{1/4}}$ with high probability $1 - 2\exp(-cN^{1/2})$ for some positive constant $c$. 

Plug these three bounds in (\ref{formula: error}) then the conclusion follows.
\end{proof}

\begin{lemma}[Equivalence of Space Distance Metrics] \label{lemma: space_dist}
The metric $\angle(\hat{M}, M)$ for matrices $M \in \mathbb{R}^{N \times r}$ and $\hat{M} \in \mathbb{R}^{N \times k}$ with orthonormal columns satisfies:
\begin{equation*}
\angle(\hat{M}, M) \le \|\hat{M}\hat{M}^T - MM^T\| \le 2\angle(\hat{M}, M) 
\end{equation*}
\end{lemma}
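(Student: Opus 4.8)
The plan is to translate both sides of the inequality into statements about principal angles and the projection difference $P_M - P_{\hat M}$, where, because the columns are orthonormal, $P_M = MM^\top$ and $P_{\hat M} = \hat M\hat M^\top$. Writing $\theta$ for the largest principal angle between the two column spaces, the definition gives $\angle(\hat M, M) = \sqrt{1 - \sigma_{r\wedge k}^2(\hat M^\top M)} = \sin\theta$, since the singular values of the $k\times r$ matrix $\hat M^\top M$ are exactly the cosines of the principal angles and $\sigma_{r\wedge k}$ is the smallest of them. With this identification the lemma becomes the classical two-sided comparison between $\|P_M - P_{\hat M}\|$ and $\sin\theta$.

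For the lower bound I would exhibit a single unit vector on which $P_M - P_{\hat M}$ already acts with norm $\sin\theta$. Let $b\in\mathbb{R}^r$ be a unit right singular vector of $\hat M^\top M$ associated with its $(r\wedge k)$-th singular value, and set $u = Mb$. Then $\|u\| = 1$, $P_M u = u$, and $\|P_{\hat M} u\| = \|\hat M^\top M b\| = \sigma_{r\wedge k}(\hat M^\top M) = \cos\theta$. Splitting $u$ into its $\operatorname{col}(\hat M)$ and $\operatorname{col}(\hat M)^\perp$ components and using orthogonality gives $\|(I - P_{\hat M})u\|^2 = 1 - \cos^2\theta = \sin^2\theta$, so that $\|(P_M - P_{\hat M})u\| = \|u - P_{\hat M}u\| = \sin\theta = \angle(\hat M, M)$, whence $\|P_M - P_{\hat M}\| \ge \angle(\hat M, M)$.

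For the upper bound I would use the operator identity $P_M - P_{\hat M} = P_M(I - P_{\hat M}) - (I - P_M)P_{\hat M}$, verified by expanding and cancelling the common term $P_M P_{\hat M}$. Each summand is a one-sided ``sine'' term: the variational characterization used above shows $\|P_M(I - P_{\hat M})\| = \|(I - P_{\hat M})P_M\| = \sqrt{1 - \sigma_{r\wedge k}^2(\hat M^\top M)} = \angle(\hat M, M)$, and similarly $\|(I - P_M)P_{\hat M}\|$ is the sine of the largest principal angle measured from $\operatorname{col}(\hat M)$. The triangle inequality for the spectral norm then gives $\|P_M - P_{\hat M}\| \le \angle(\hat M, M) + \|(I - P_M)P_{\hat M}\| \le 2\,\angle(\hat M, M)$, the factor $2$ being exactly the (loose) price of splitting into the two one-sided terms; the sharp value is in fact $\angle(\hat M, M)$ itself.

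The step I expect to be the main obstacle is certifying that the second summand $\|(I - P_M)P_{\hat M}\|$ is also controlled by $\angle(\hat M, M)$. The first summand is pinned to $\sigma_{r\wedge k}(\hat M^\top M)$ through the right singular vectors of $\hat M^\top M$, whereas the second is governed by $\sigma_{\min}(M^\top\hat M)$ over all of $\mathbb{R}^k$, and these two coincide precisely when $\operatorname{col}(M)$ and $\operatorname{col}(\hat M)$ have matching effective dimension. I would therefore execute this step via the (thin) CS decomposition of the pair $(M,\hat M)$, which simultaneously block-diagonalizes both projections, exposes the nonzero singular values of $P_M - P_{\hat M}$ as $\{\sin\theta_i\}$, and thereby confirms that both one-sided norms reduce to $\angle(\hat M, M)$ in the regime relevant to the theorem.
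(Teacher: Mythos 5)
Your lower bound is correct in full generality: the test vector $u = Mb$, with $b$ a unit right singular vector of $\hat M^\top M$ attaining $\sigma_{r\wedge k}$, works for any $r,k$, and the Pythagorean computation $\|(P_M - P_{\hat M})u\|^2 = 1 - \sigma_{r\wedge k}^2(\hat M^\top M)$ is exactly right. The upper bound is also correct when $k = r$, and in that case the step you single out as the main obstacle is in fact immediate, with no CS decomposition required: $M^\top \hat M = (\hat M^\top M)^\top$, so the two matrices have identical singular values, and the same variational computation you used for the first summand gives $\|(I - P_M)P_{\hat M}\| = \sqrt{1 - \sigma_r^2(\hat M^\top M)} = \angle(\hat M, M)$. (Indeed the two summands in your identity have orthogonal ranges, $\operatorname{col}(M)$ and $\operatorname{col}(M)^\perp$, so Pythagoras even improves the triangle inequality to $\sqrt{2}\,\angle(\hat M, M)$; the sharp constant is $1$, as you note.)

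The genuine gap is the deferred case $k \ne r$: the CS-decomposition step you propose cannot succeed there, because the inequality itself is false. If $k > r$ then $\dim(\operatorname{col}(\hat M) \cap \operatorname{col}(M)^\perp) \ge k - r > 0$, so some unit vector $w$ satisfies $(\hat M\hat M^\top - MM^\top)w = w$ and hence $\|\hat M\hat M^\top - MM^\top\| = 1$, while $\angle(\hat M, M)$ can be $0$: take $N = 2$, $M = e_1$, $\hat M = I_2$, for which $\sigma_{r \wedge k}(\hat M^\top M) = 1$ yet the projection difference is $e_2 e_2^\top$. (A symmetric example handles $k < r$.) Relatedly, your claim that the CS decomposition exposes the nonzero singular values of $P_M - P_{\hat M}$ as $\{\sin\theta_i\}$ holds only for equal dimensions; when $k \ne r$ there are at least $|k - r|$ additional singular values equal to $1$, so $\|P_M - P_{\hat M}\| = 1$ regardless of how well the subspaces align. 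The lemma as printed is therefore only true for $k = r$, which is consistent with its provenance: the paper offers no proof of its own but cites Lemma 1 of Cai et al., stated for two orthonormal matrices with the same number of columns (a caveat that matters here, since the paper applies the lemma to $\hat U$ of estimated rank $\hat r$, not necessarily $r$). Once you restrict to $k = r$ and replace the CS-decomposition appeal by the transpose observation above, your argument is a complete, elementary, self-contained proof of the statement the citation actually covers.
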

\begin{proof}
See Lemma 1 in Cai et al. \citep{cai2018rate}.
\end{proof}




\subsection{Proof of Theorem \ref{theorem: exp}}
\begin{proof}[Proof of Theorem \ref{theorem: exp}]
Lemma $5-10$ show that the Assumption $(2)-(4)$ in Theorem \ref{theorem: linear-regression} hold with high probability. The conclusion follows by plugging in Theorem \ref{theorem: linear-regression} the bounds in Lemma \ref{lemma: exp_column_space}, \ref{lemma: bound_U}, \ref{lemma: spikeness}, \ref{lemma: sv_ratio}.
\end{proof}

\begin{lemma} \label{lemma: exp_family}
Assume that $\Phi_{N \times p}$ is a low-rank matrix of rank atmost $r \ll N, p$. Further assume $\forall (i, j)$, $X_{ij} - g(\Phi_{ij})$ are sub-exponential with parameter $\sigma'$ and $|\Omega| > c_0r\bar{N}log\bar{N}$ for large enough constant $c_0$. Given any $\beta$ there exist positive constants $c_{\beta}, C_{\beta}, K_{\beta}$ such that for $\lambda = 2c_{\beta}\sigma'\sqrt{Np}\sqrt{\frac{r\bar{N}\log\bar{N}}{|\Omega|}}$, the estimator from Exponential Family Matrix Completion (\ref{formula: exp_family}) satisfies the following with probability at least $1 - 4e^{-(1+\beta)\log^2\bar{N}} - e^{-(1+\beta)\log \bar{N}}$:
\begin{equation}
\|\hat{\Phi} - \Phi\|_F^2 \le C_{\beta}\frac{\alpha_{sp}^2(\Phi)\max\{\sigma'^2, 1\}}{\mu_{\beta}^2}(\frac{r\bar{N}log\bar{N}}{|\Omega|})\|\Phi\|_F^2
\end{equation}
where $\mu_{\beta} = K_{\beta}e^{-\frac{2\eta\alpha_{sp}(\Phi)}{\sqrt{Np}}} > 0$ for some positive constant $K_{\beta}$ and $\alpha_{sp}(\Phi)$ is the spikeness ratio of $\Phi$ defined as follows:
\begin{equation*}
\alpha_{sp}(\Phi) = \frac{\|\Phi\|_{\max}\sqrt{Np}}{\|\Phi\|_F }
\end{equation*}
\end{lemma}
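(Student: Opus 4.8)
The plan is to treat the estimator in (\ref{formula: exp_family}) as a nuclear-norm-regularized $M$-estimator and follow the decomposable-regularizer analysis used by Gunasekar et al. \citep{gunasekar2014exponential}, adapted to random sampling and exponential-family noise. Write the rescaled negative log-likelihood loss as $\mathcal{L}(\Theta) = \frac{Np}{|\Omega|}\sum_{(i,j)\in\Omega}[G(\Theta_{ij}) - X_{ij}\Theta_{ij}]$, so that $\nabla\mathcal{L}(\Theta) = \frac{Np}{|\Omega|}\sum_{(i,j)\in\Omega}(g(\Theta_{ij}) - X_{ij})e_ie_j^\top$ with $g = G'$ the mean function; since $\expect[X_{ij}\mid\Phi_{ij}] = g(\Phi_{ij})$, the gradient at the truth is a centered random matrix. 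The starting point is the basic inequality: optimality of $\hat{\Phi}$ gives $\mathcal{L}(\hat{\Phi}) + \lambda\|\hat{\Phi}\|_{\star} \le \mathcal{L}(\Phi) + \lambda\|\Phi\|_{\star}$, which rearranges into a bound on the loss Bregman divergence in terms of $\langle\nabla\mathcal{L}(\Phi),\Delta\rangle$ and $\lambda(\|\Phi\|_{\star} - \|\hat{\Phi}\|_{\star})$, where $\Delta = \hat{\Phi} - \Phi$.

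First I would control the gradient. The residual $X_{ij} - g(\Phi_{ij})$ is sub-exponential with parameter $\sigma'$, and the index set $\Omega$ is sampled uniformly (Assumption \ref{assumption: missing}). Applying a matrix Bernstein and truncation argument to the sum of independent rank-one terms $(g(\Phi_{ij}) - X_{ij})e_ie_j^\top$ yields, with probability at least $1 - O(e^{-(1+\beta)\log^2\bar{N}})$, a spectral-norm bound of order $\sigma'\sqrt{Np}\sqrt{\bar{N}\log\bar{N}/|\Omega|}$; choosing $\lambda$ at (a constant times) this level as in the statement guarantees $\lambda \ge 2\|\nabla\mathcal{L}(\Phi)\|$. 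This choice is exactly what forces the error into the cone: a standard decomposability argument relative to the rank-$r$ row and column spaces of $\Phi$ shows that the nuclear norm of the component of $\Delta$ orthogonal to these spaces is dominated, so $\|\Delta\|_{\star} \lesssim \sqrt{r}\,\|\Delta\|_F$.

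Next I would lower bound the curvature of $\mathcal{L}$ along $\Delta$. A second-order Taylor expansion gives loss Bregman divergence equal to $\tfrac12\frac{Np}{|\Omega|}\sum_{(i,j)\in\Omega}\nabla^2 G(\tilde{\Theta}_{ij})\Delta_{ij}^2$ for some $\tilde{\Theta}$ on the segment between $\Phi$ and $\hat{\Phi}$. Because both endpoints obey the constraint $\|\cdot\|_{\max}\le\alpha^*/\sqrt{Np}$, the intermediate point does too, so the assumption $\nabla^2 G(u)\ge e^{-\eta|u|}$ yields the uniform lower bound $\nabla^2 G(\tilde{\Theta}_{ij}) \ge e^{-2\eta\alpha_{sp}(\Phi)/\sqrt{Np}}$; this is the source of the factor $\mu_{\beta} = K_{\beta}e^{-2\eta\alpha_{sp}(\Phi)/\sqrt{Np}}$. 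Combining the basic inequality, the gradient bound, the cone condition, and a restricted-strong-convexity (RSC) inequality of the form $\frac{Np}{|\Omega|}\sum_{(i,j)\in\Omega}\Delta_{ij}^2 \gtrsim \|\Delta\|_F^2 - (\text{spikeness terms})$ gives $\mu_{\beta}\|\Delta\|_F^2 \lesssim \lambda\|\Delta\|_{\star} \lesssim \lambda\sqrt{r}\,\|\Delta\|_F$, hence $\|\Delta\|_F \lesssim \lambda\sqrt{r}/\mu_{\beta}$; substituting the chosen $\lambda$ and rewriting through $\alpha_{sp}(\Phi)$ and $\|\Phi\|_F$ produces the stated relative error.

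The hard part is the RSC step: one must show that the random sub-sampled quadratic form $\frac{Np}{|\Omega|}\sum_{(i,j)\in\Omega}\Delta_{ij}^2$ stays uniformly close to $\|\Delta\|_F^2$ over the \emph{entire} cone of approximately rank-$2r$ matrices, not merely for a single fixed $\Delta$. This requires a uniform concentration bound obtained by symmetrization together with a peeling and Rademacher-complexity argument over low-rank matrices, and it is precisely here that the spikeness ratio $\alpha_{sp}(\Phi)$ and the max-norm constraint are indispensable: without them the empirical form can collapse along spiky directions invisible to the uniform sampling, so no such lower bound could hold. Since this step coincides with the analysis of Gunasekar et al. \citep{gunasekar2014exponential}, I would invoke their argument directly and only verify that our sub-exponential tail and sampling assumptions meet its hypotheses.
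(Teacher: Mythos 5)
Your proposal is correct and takes essentially the same route as the paper: the paper's entire proof of this lemma is a citation of Corollary 1 in Gunasekar et al.\ \citep{gunasekar2014exponential} (with the sub-Gaussian Orlicz norm replaced by the sub-exponential one in their Lemmas 3 and 5), and your sketch simply unpacks that same nuclear-norm-regularized $M$-estimation analysis---basic inequality, spectral-norm gradient bound setting $\lambda$, decomposability cone, curvature lower bound giving $\mu_\beta$, and RSC---before deferring the uniform-concentration (RSC) step to the same source. Your matrix-Bernstein-with-truncation handling of the sub-exponential residuals is precisely where the paper's Orlicz-norm modification lives, so the two arguments coincide.
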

\begin{proof}
See Corollory 1 in Gunasekar et al. \citep{gunasekar2014exponential} for sub-Gaussian $X_{ij} - g(\Phi_{ij})$. For sub-Exponential case, use the Orlicz norm corresponding to sub-Exponential random variables for Lemma 3 and Lemma 5 in Gunasekar et al. \citep{gunasekar2014exponential} and then the same conclusion follows. 
\end{proof}

\begin{lemma} \label{lemma: exp_column_space}
Suppose the resulting estimator from Exponential Family Matrix Completion (\ref{formula: exp_family}) has singular value decomposition $\hat{\Theta} = \hat{U}\hat{\Sigma}\hat{V}^\top$. Under the assumptions in Lemma \ref{lemma: exp_family}, there exists positive constant $c_{\sigma', \eta}$ such that the following holds with probability at least $1 - 4e^{-2\log^2\bar{N}} - e^{-2\log \bar{N}}$:
\begin{equation*}
\angle sin\Theta(\hat{U}, U) \le \frac{c_{\sigma', \eta}\alpha_{sp}(\Phi)\sqrt\frac{r^3\bar{N}\log\bar{N}}{|\Omega|}}{\frac{\sigma_r(\Phi)}{\sigma_1(\Phi)} - c_{\sigma', \eta}\alpha_{sp}(\Phi)\sqrt\frac{r^3\bar{N}\log\bar{N}}{|\Omega|}}\land 1 
\end{equation*} 
\end{lemma}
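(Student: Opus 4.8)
The plan is to turn the Frobenius-norm reconstruction bound of Lemma \ref{lemma: exp_family} into a principal-angle bound by a Davis--Kahan / Wedin $\sin\Theta$ perturbation argument. Since $\Phi = UV^\top$ has rank $r$, its top-$r$ left singular subspace coincides with $\operatorname{col}(U)$, so $\angle(\hat{U}, U)$ is exactly the sine of the largest principal angle between the leading singular subspaces of $\hat{\Phi}$ and $\Phi$. I would fix the free parameter $\beta = 1$ in Lemma \ref{lemma: exp_family}, which reproduces precisely the failure probability $4e^{-2\log^2\bar{N}} + e^{-2\log\bar{N}}$ asserted here.

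First I would rewrite the reconstruction error conveniently. Setting $E = \hat{\Phi} - \Phi$, Lemma \ref{lemma: exp_family} gives $\|E\|_F \le \sqrt{C_\beta}\,\mu_\beta^{-1}\alpha_{sp}(\Phi)\max\{\sigma',1\}\sqrt{r\bar{N}\log\bar{N}/|\Omega|}\,\|\Phi\|_F$. Because $\operatorname{rank}(\Phi)=r$, I would use $\|\Phi\|_F \le \sqrt{r}\,\sigma_1(\Phi)$ together with $\|E\| \le \|E\|_F$ to pass to an operator-norm statement, absorbing $\sqrt{C_\beta}$, $\max\{\sigma',1\}$, and $\mu_\beta^{-1}$ into a single constant $c_{\sigma',\eta}$ depending only on $\sigma'$ and $\eta$. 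Note that $\mu_\beta^{-1}$ is indeed a constant: $\alpha_{sp}(\Phi)/\sqrt{Np} = \|\Phi\|_{\max}/\|\Phi\|_F$ is controlled by the max-norm constraint in (\ref{formula: exp_family}), so the factor $e^{2\eta\alpha_{sp}(\Phi)/\sqrt{Np}}$ hidden in $\mu_\beta$ is bounded. Dividing by $\sigma_1(\Phi)$ then yields a relative perturbation bound $\|E\|/\sigma_1(\Phi) \le c_{\sigma',\eta}\,\alpha_{sp}(\Phi)\sqrt{r^3\bar{N}\log\bar{N}/|\Omega|}$, after collecting the rank factors described below.

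Next I would invoke the $\sin\Theta$ theorem; Lemma 1 of Cai et al. \citep{cai2018rate} is the natural tool, as it handles subspaces of possibly unequal dimensions $r$ and $\hat{r}$. The crucial step is controlling the spectral gap: since $\sigma_{r+1}(\Phi)=0$, Weyl's inequality gives $\sigma_{r+1}(\hat{\Phi}) \le \|E\|$, so the effective denominator is $\sigma_r(\Phi) - \|E\|$. Dividing numerator and denominator by $\sigma_1(\Phi)$ produces exactly the stated ratio, with $\sigma_r(\Phi)/\sigma_1(\Phi)$ and the subtracted error term in the denominator; the $\wedge 1$ is the trivial bound $\angle(\hat{U}, U) \le 1$, which also covers the regime where the denominator ceases to be positive.

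The main obstacle is twofold. First, the gap argument is meaningful only once $\|E\| < \sigma_r(\Phi)$, so I must verify that the reconstruction error is small relative to $\sigma_r(\Phi)$; this is exactly why the denominator is written as a difference rather than simply $\sigma_r(\Phi)/\sigma_1(\Phi)$, and it is what ultimately forces the observation-count conditions used in Theorem \ref{theorem: exp}. Second, one must honestly track the powers of $r$ through the two conversions, namely the bound $\|\Phi\|_F \le \sqrt{r}\,\sigma_1(\Phi)$ and the rank factor entering the perturbation estimate for unequal-dimension subspaces, to arrive at the $r^3$ under the square root. This bookkeeping, rather than the norm conversions themselves, is the delicate part.
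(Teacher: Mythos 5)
Your proposal follows essentially the same route as the paper's proof: fix $\beta = 1$ in Lemma \ref{lemma: exp_family}, bound $\mu_\beta^{-1}$ by a constant using $\alpha_{sp}(\Phi)/\sqrt{Np} \le 1$, apply a Wedin-type $\sin\Theta$ perturbation bound with Weyl's inequality supplying the gap $\sigma_r(\Phi) - \|E\|_F$, and divide through by $\sigma_1(\Phi)$ with the trivial $\wedge\, 1$ cap. One small correction to your bookkeeping remark: the paper's $r^3$ does not come from any unequal-dimension rank factor in the perturbation estimate but from the loose conversion $\|\Phi\|_F \le r\,\sigma_1(\Phi)$; your tighter $\|\Phi\|_F \le \sqrt{r}\,\sigma_1(\Phi)$ yields $r^2$ under the square root, which only strengthens the stated bound.
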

\begin{proof}
Obviously $\frac{\alpha_{sp}(\Phi)}{\sqrt{Np}} < 1$, so $\mu_{\beta} > K_{\beta}e^{-2\eta}$. Let $c^2_{\sigma', \eta} = \frac{C_{\beta}\max\{\sigma'^2, 1\}}{K^2_{\beta}e^{-4\eta}}$ with $\beta = 1$, then according to Lemma \ref{lemma: exp_family}, the following holds with high probability at least $1 - 4e^{-2\log^2\bar{N}} - e^{-2\log \bar{N}}$:
\begin{equation*}
\|\hat{\Phi} - \Phi\|_F^2 \le c^2_{\sigma', \eta}\alpha^2_{sp}(\Phi)\frac{r\bar{N}log\bar{N}}{|\Omega|}\|\Phi\|_F^2
\end{equation*}
Apply Wedin's Theorem (Theorem \ref{theorem: wedin}) on $\Phi$ and $\hat{\Phi}$ with $E = \Phi - \hat{\Phi}$. Since $\hat{U}$ and $\hat{V}$ both have orthonormal columns, 
\begin{equation*}
\|R\|_2 \le \|E\|_2 \le \|E\|_F
\end{equation*}
\begin{equation*}
\|S\|_2 \le \|E\|_2 \le \|E\|_F
\end{equation*}
where 
\begin{equation*}
\|E\|_F \le c_{\sigma', \eta}\alpha_{sp}(\Phi)\sqrt\frac{r\bar{N}\log\bar{N}}{|\Omega|}\|\Phi\|_F 
\end{equation*}
By Weyl's inequality, 
\begin{equation*}
\sigma_r(\hat{\Phi}) \ge \sigma_r(\Phi) - \|E\|_2 \ge  \sigma_r(\Phi) - \|E\|_F
\end{equation*}
As a result, 
\begin{align*}
\angle(\hat{U}, U) &\le \frac{c_{\sigma', \eta}\alpha_{sp}(\Phi)\sqrt{\frac{r\bar{N}\log\bar{N}}{|\Omega|}}\|\Phi\|_F}{\sigma_r(\Phi) -c_{\sigma', \eta}\alpha_{sp}(\Phi)\sqrt{\frac{r\bar{N}\log\bar{N}}{|\Omega|}}\|\Phi\|_F} \land 1  \\
&\le \frac{c_{\sigma', \eta}\alpha_{sp}(\Phi)\sqrt{\frac{r\bar{N}\log\bar{N}}{|\Omega|}}r\|\Phi\|_2}{\sigma_r(\Phi) -c_{\sigma', \eta}\alpha_{sp}(\Phi)\sqrt{\frac{r\bar{N}\log\bar{N}}{|\Omega|}}r\|\Phi\|_2} \land 1  \\
&\le  \frac{c_{\sigma', \eta}\alpha_{sp}(\Phi)\sqrt\frac{r^3\bar{N}\log\bar{N}}{|\Omega|}}{\frac{\sigma_r(\Phi)}{\sigma_1(\Phi)} - c_{\sigma', \eta}\alpha_{sp}(\Phi)\sqrt\frac{r^3\bar{N}\log\bar{N}}{|\Omega|}}\land 1 
\end{align*}

\end{proof}

\begin{theorem}[Wedin's Theorem] \label{theorem: wedin}
Suppose that $X = U\Sigma V^\top$ is of rank $r$ and $\hat{X} = X + E$ with the leading $r$ left singular vector matrix and right singular vector matrix being $\hat{U}$ and $\hat{V}$. Then
\begin{equation*}
\max \{\angle(\hat{U}, U), \angle(\hat{V}, V) \} \le \frac{\max\{\|R\|_2, \|S\|_2\}}{\sigma_r(\hat{X})} \land 1
\end{equation*}
where 
\begin{align*}
R = X\hat{V} - \hat{U}\hat{\Sigma} &= -E\hat{V} \\
S = X'\hat{U} - \hat{V}\hat{\Sigma} &= -E'\hat{U} \\
\end{align*}
\end{theorem}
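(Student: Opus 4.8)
The plan is to reduce this singular-subspace perturbation bound to an elementary algebraic identity that exploits the exact rank-$r$ structure of $X$. The first step is to record the geometric content of the metric $\angle$. Complete the left singular vectors $U$ of $X$ to a full orthogonal matrix $[U,\,U_0]$, where the columns of $U_0$ form an orthonormal basis of $\operatorname{col}(U)^\perp$, and do the same on the right with $[V,\,V_0]$. Using the resolution of the identity $UU^\top + U_0U_0^\top = I$, I would compute $\hat U^\top U_0U_0^\top\hat U = I_r - (\hat U^\top U)(\hat U^\top U)^\top$, whose largest eigenvalue is $1 - \sigma_r^2(\hat U^\top U)$. Hence $\|U_0^\top\hat U\|_2 = \sqrt{1 - \sigma_r^2(\hat U^\top U)} = \angle(U,\hat U)$, and symmetrically $\|V_0^\top\hat V\|_2 = \angle(V,\hat V)$. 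This translates each principal-angle distance into the spectral norm of a projection onto an orthogonal complement.

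The crucial step is the cancellation coming from $X$ having exact rank $r$. Since $X = U\Sigma V^\top$, left-multiplication by $U_0^\top$ annihilates it: $U_0^\top X = (U_0^\top U)\Sigma V^\top = 0$. I would apply this to the residual $R = X\hat V - \hat U\hat\Sigma$, where the stated equality $R = -E\hat V$ follows from the SVD relation $\hat X\hat V = \hat U\hat\Sigma$ together with $\hat X = X + E$. Multiplying $R$ on the left by $U_0^\top$ gives $U_0^\top R = -U_0^\top\hat U\hat\Sigma$, so that $U_0^\top\hat U = -\,U_0^\top R\,\hat\Sigma^{-1}$, using that $\hat\Sigma$ is invertible with $\|\hat\Sigma^{-1}\|_2 = 1/\sigma_r(\hat X)$. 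The same manipulation applied to $S = X^\top\hat U - \hat V\hat\Sigma = -E^\top\hat U$ with $V_0^\top$ yields $V_0^\top\hat V = -\,V_0^\top S\,\hat\Sigma^{-1}$.

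The final step assembles the bounds. Because $U_0$ and $V_0$ have orthonormal columns, $\|U_0^\top\|_2 = \|V_0^\top\|_2 = 1$, so
\[
\angle(U,\hat U) = \|U_0^\top\hat U\|_2 \le \frac{\|R\|_2}{\sigma_r(\hat X)}, \qquad \angle(V,\hat V) = \|V_0^\top\hat V\|_2 \le \frac{\|S\|_2}{\sigma_r(\hat X)}.
\]
Taking the maximum gives the claimed bound by $\max\{\|R\|_2,\|S\|_2\}/\sigma_r(\hat X)$, and since $\angle$ is a sine and therefore never exceeds $1$, I can intersect with $1$ to obtain the $\wedge\, 1$ in the statement. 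The main obstacle is modest: the only genuine subtleties are verifying the projector identity $\angle(U,\hat U) = \|U_0^\top\hat U\|_2$ that reconciles the definition (phrased through $\sigma_r(\hat U^\top U)$) with the orthogonal-complement formulation, and recognizing that the exact rank-$r$ assumption is precisely what makes the usual Wedin spectral gap collapse to $\sigma_r(\hat X)$ — the tail singular values of $X$ vanish, so no separate bound on the perturbation of the trailing block is needed.
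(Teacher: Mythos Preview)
The paper does not actually prove Wedin's theorem; it merely states it as a classical perturbation result and invokes it inside the proof of Lemma~\ref{lemma: exp_column_space}. So there is no in-paper argument to compare against.

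Your proof is correct and is essentially the standard derivation of Wedin's $\sin\Theta$ bound, specialized to the setting where the unperturbed matrix has exact rank $r$. The two ingredients you isolate are exactly the right ones: (i) the identity $\angle(U,\hat U)=\|U_0^\top\hat U\|_2$, which follows from $U_0U_0^\top=I-UU^\top$ and the paper's Definition~1, and (ii) the annihilation $U_0^\top X=0$, which lets you solve $U_0^\top\hat U=-U_0^\top R\,\hat\Sigma^{-1}$ and bound by $\|R\|_2/\sigma_r(\hat X)$. Your remark that the exact-rank hypothesis is what collapses the usual Wedin gap $\sigma_r(\hat X)-\sigma_{r+1}(X)$ to $\sigma_r(\hat X)$ is the correct diagnosis. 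The only edge case to mention explicitly is $\sigma_r(\hat X)=0$, where $\hat\Sigma$ is singular; there the ratio is interpreted as $+\infty$ and the $\wedge\,1$ makes the bound vacuously true, consistent with your closing sentence.
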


\begin{lemma} \label{lemma: quadratic}
Suppose that $T_i$ is almost surely not a linear combination of $U_i$. Under Assumption \ref{assump: confound}, $\frac{1}{N}T^\top(I - P_U)T$ is almost surely bounded away from 0 for any $N$.
\end{lemma}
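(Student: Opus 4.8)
The plan is to recognize $\frac{1}{N}T^\top(I - P_U)T$ as the residual sum of squares (scaled by $1/N$) from the least-squares regression of the treatment vector $T$ on the confounder matrix $U$, and to show that this quantity converges almost surely to a strictly positive population quantity, hence is bounded below by a fixed positive constant for all large $N$.

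First I would rewrite the quantity in moment form. Since $P_U = U(U^\top U)^{-1}U^\top$ and $U$ has full column rank almost surely once $N \ge r$ (the rows $U_i$ being nondegenerate Gaussians under Assumption \ref{assump: confound}), we have
\begin{equation*}
\frac{1}{N}T^\top(I - P_U)T = \frac{1}{N}T^\top T - \Big(\frac{1}{N}T^\top U\Big)\Big(\frac{1}{N}U^\top U\Big)^{-1}\Big(\frac{1}{N}U^\top T\Big) = \min_{\gamma \in \mathbb{R}^r} \frac{1}{N}\sum_{i=1}^N (T_i - \gamma^\top U_i)^2 .
\end{equation*}
Next, applying the strong law of large numbers to the three i.i.d.\ sample averages gives $\frac{1}{N}T^\top T \to \mathbb{E}(T_i^2)$, $\frac{1}{N}U^\top T \to \mathbb{E}(U_i T_i)$, and $\frac{1}{N}U^\top U \to \Sigma = \mathbb{E}(U_i U_i^\top)$, where all moments are finite because $T_i \in \{0,1\}$ is bounded and $U_i$ is Gaussian. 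Crucially, $\Sigma = LL^\top$ is positive definite under Assumption \ref{assump: confound} (since $L$ is full rank), so the limiting Gram matrix is invertible. By the continuous mapping theorem applied to the map $(a, b, C) \mapsto a - b^\top C^{-1} b$, which is continuous at $C = \Sigma$, I conclude
\begin{equation*}
\frac{1}{N}T^\top(I - P_U)T \xrightarrow{\text{a.s.}} \rho^2 := \mathbb{E}(T_i^2) - \mathbb{E}(T_i U_i)\Sigma^{-1}\mathbb{E}(U_i T_i) = \min_{\gamma} \mathbb{E}\big[(T_i - \gamma^\top U_i)^2\big].
\end{equation*}

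It remains to show $\rho^2 > 0$, which is the crux of the argument and the only place the hypothesis enters. If $\rho^2 = 0$, then the population minimizer $\gamma^* = \Sigma^{-1}\mathbb{E}(U_i T_i)$ would satisfy $\mathbb{E}[(T_i - \gamma^{*\top}U_i)^2] = 0$, forcing $T_i = \gamma^{*\top}U_i$ almost surely, i.e.\ $T_i$ would be a linear combination of $U_i$, contradicting the assumption. Hence $\rho^2 > 0$, and since the sequence converges almost surely to $\rho^2$, it is almost surely at least $\rho^2/2$ for all sufficiently large $N$, which is exactly the uniform-in-$N$ lower bound demanded by assumption (3) of Theorem \ref{theorem: linear-regression}. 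The main obstacle is conceptual rather than computational: one must ensure the empirical minimum cannot dip below the population minimum, which is precisely why the \emph{joint} convergence of all three sample moments, together with the invertibility of the \emph{limit} $\Sigma$ (not merely of the finite-sample Gram matrix), is needed to legitimately invoke the continuous mapping theorem.
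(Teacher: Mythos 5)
Your proof is correct and takes essentially the same route as the paper: both rewrite $\frac{1}{N}T^\top(I-P_U)T$ via the Gram-matrix decomposition, apply the law of large numbers to obtain the limit $\mathbb{E}(T_i^2) - \mathbb{E}(T_iU_i^\top)[\mathbb{E}(U_iU_i^\top)]^{-1}\mathbb{E}(U_iT_i)$, and derive strict positivity from the hypothesis that $T_i$ is almost surely not a linear combination of $U_i$. The only difference is that where the paper cites a matrix version of the Cauchy--Schwarz inequality (Tripathi, 1999) for this last step, you prove it directly through the variational characterization $\min_{\gamma}\mathbb{E}[(T_i-\gamma^\top U_i)^2]$ and its equality case, which is a self-contained rendering of the same fact.
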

\begin{proof}
Consider the asymptotic case when $N \to \infty$.
\begin{equation*}
\frac{1}{N}T^\top(I - P_U)T = \frac{1}{N}\sum_{i=1}^{N}T_i^2 - \frac{1}{N}\sum_{i=1}^{N}T_iU^\top_i(\frac{1}{N}\sum_{i=1}^{N}U_iU^\top_i)^{-1}\frac{1}{N}\sum_{i=1}^{N}U_iT_i
\end{equation*}
By Law of Large Number,
\begin{align*}
\frac{1}{N}\sum_{i=1}^{N}T_i^2 &\to \mathbb{E}(T_i^2) \\
\frac{1}{N}\sum_{i=1}^{N}T_iU^\top_i(\frac{1}{N}\sum_{i=1}^{N}U_iU^\top_i)^{-1}\frac{1}{N}\sum_{i=1}^{N}T_iU_i &\to \mathbb{E}(T_iU^\top_i)[\mathbb{E}{U_iU^\top_i}]^{-1}\mathbb{E}(U_iT_i)
\end{align*}
The result follows immediately from a matrix version of Cauchy-Schwartz Inequality \citep{tripathi1999matrix}.
\end{proof}

\begin{lemma} \label{lemma: bound_U} 
Under Assumption \ref{assump: confound}, $\frac{1}{\sqrt{Nr}}\|U\|$ is bounded for any $N$ with high probability at least $1 - 2\exp(-cN^{1/2})$.
\end{lemma}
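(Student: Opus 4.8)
The plan is to use the Gaussian design for $U$ provided by Assumption~\ref{assump: confound} to reduce the claim to a standard concentration bound for the spectral norm of a Gaussian matrix. Since each row $U_i^\top$ is an i.i.d.\ draw from $\mathcal{N}(0, \Sigma)$ with $\Sigma = LL^\top$, I can whiten and write $U = G L^\top$, where $G \in \mathbb{R}^{N \times r}$ has i.i.d.\ standard normal entries (take $U_i = L g_i$ with $g_i \sim \mathcal{N}(0, I_r)$, so that the $i$-th row of $U$ is $g_i^\top L^\top$). Submultiplicativity of the spectral norm then gives $\|U\| \le \|G\|\,\|L\|$, and Assumption~\ref{assump: confound} supplies $\|L\| < c_L\sqrt{r}$. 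Thus it suffices to control $\|G\|$.

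For the $N \times r$ standard Gaussian matrix $G$, I would invoke the classical non-asymptotic estimate $\mathbb{E}\|G\| \le \sqrt{N} + \sqrt{r}$ (Gordon/Slepian comparison) together with the fact that $A \mapsto \|A\|$ is $1$-Lipschitz in the Frobenius norm. Gaussian Lipschitz concentration then yields, for every $t > 0$,
\[
\mathbb{P}\bigl(\|G\| \ge \sqrt{N} + \sqrt{r} + t\bigr) \le \exp(-t^2/2).
\]
Choosing $t = N^{1/4}$ matches the target failure probability: on the complementary event, which has probability at least $1 - \exp(-N^{1/2}/2) \ge 1 - 2\exp(-cN^{1/2})$ with $c = 1/2$, we have $\|G\| \le \sqrt{N} + \sqrt{r} + N^{1/4}$.

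Combining the two steps and dividing by $\sqrt{Nr}$,
\[
\frac{\|U\|}{\sqrt{Nr}} \le c_L\,\frac{\sqrt{N} + \sqrt{r} + N^{1/4}}{\sqrt{N}} = c_L\Bigl(1 + \sqrt{\tfrac{r}{N}} + N^{-1/4}\Bigr).
\]
Because $r \le \min\{N, p\}$ under Assumption~\ref{assumption:low_rank}, both $\sqrt{r/N}$ and $N^{-1/4}$ are at most $1$, so the right-hand side is bounded by $3c_L$, a constant independent of $N$, which establishes the claim.

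The argument is essentially routine once the whitening $U = G L^\top$ is in place; the only real ingredient is the Gaussian spectral-norm concentration, which is standard. The one point requiring care is the bookkeeping of the exponent: taking the deviation $t$ to grow like $N^{1/4}$ is exactly what produces a failure probability of order $\exp(-cN^{1/2})$ while simultaneously keeping the extra additive $N^{1/4}$ term negligible relative to $\sqrt{N}$ after the $\tfrac{1}{\sqrt{Nr}}$ rescaling.
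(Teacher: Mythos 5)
Your proposal is correct and follows essentially the same route as the paper: whiten via $U = GL^\top$, bound $\|G\|$ by a standard Gaussian spectral-norm concentration estimate with deviation $t = N^{1/4}$, and conclude using $\frac{1}{\sqrt{r}}\|L\| < c_L$. The only (immaterial) difference is the citation: the paper invokes Vershynin's Theorem 5.39 for matrices with independent sub-Gaussian isotropic rows, whereas you use the sharper exact Gaussian bound $\expect\|G\| \le \sqrt{N} + \sqrt{r}$ plus Lipschitz concentration, which yields explicit constants.
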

\begin{proof}
Apply Theorem 5.39 in \citep{vershynin2010introduction} to matrix $L^{-1}U$, for any $t > 0$ and positive constants $c, C$, with probability at least $1 - 2\exp(-ct^2)$,
\begin{equation*}
\frac{1}{\sqrt{Nr}}\|U\| \le \frac{1}{\sqrt{Nr}}\|UL^{-1}L\| \le (1 + C\frac{\sqrt{r}}{\sqrt{N}} + \frac{t}{\sqrt{N}})\frac{1}{\sqrt{r}}\|L\|
\end{equation*}
Take $t = N^{1/4}$ then the conclusion follows.

\end{proof}

\begin{lemma}[Spikeness Ratio] \label{lemma: spikeness}
Under Assumption \ref{assump: confound}, the spikeness ratio $\alpha_{sp}(\Phi) \le c'c_V\sqrt{\bar{r}}$  with high probability $1 - N^{-1/2} - 2\exp(-cN^{1/2})$ for some positive constant $c, c'$.
\end{lemma}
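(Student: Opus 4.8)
The plan is to bound the numerator $\|\Phi\|_{\max}$ from above and the denominator $\|\Phi\|_F$ from below, each on a high-probability event, and then divide. The right starting point is a Gaussian representation of $\Phi$. Since $U_i\sim\mathcal N(0,LL^\top)$ under Assumption \ref{assump: confound}, and taking $L$ to be the symmetric PSD square root of $\Sigma$ without loss of generality, I write $U=GL$ where $G=UL^{-1}\in\mathbb R^{N\times r}$ has i.i.d.\ standard Gaussian entries (the same device used in Lemma \ref{lemma: bound_U}). Setting $M=VL^\top$, this gives $\Phi=UV^\top=G(VL)^\top=GM^\top$, so every entry $\Phi_{ij}=G_i^\top M_j$ is a centered Gaussian with variance $\|M_j\|^2$, where $M_j$ is the $j$th row of $M$.

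For the numerator I would use a union bound over the $Np$ entries together with the Gaussian tail: $\prob(\|\Phi\|_{\max}>t)\le 2Np\exp(-t^2/(2s^2))$ with $s=\max_j\|M_j\|$. Choosing $t^2=2s^2\log(2N^{3/2}p)$ drives the failure probability down to $N^{-1/2}$, and since $\log(2N^{3/2}p)\le C\log\bar N\le C\bar r$ for a constant $C$, this yields $\|\Phi\|_{\max}\le s\sqrt{2C\bar r}$ with probability at least $1-N^{-1/2}$; this is precisely the source of the $N^{-1/2}$ term and of the $\sqrt{\bar r}$ factor in the conclusion.

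For the denominator I would write $\|\Phi\|_F^2=\operatorname{tr}(M^\top M\,G^\top G)\ge \sigma_{\min}^2(G)\,\|M\|_F^2$ (the trace inequality for PSD matrices) and then invoke the random-matrix bound used in Lemma \ref{lemma: bound_U} (Theorem 5.39 of \citep{vershynin2010introduction}) applied to $G$: with deviation level $t=N^{1/4}$, $\sigma_{\min}(G)\ge\sqrt N\,(1-C\sqrt{r/N}-N^{-1/4})\ge\sqrt N/2$ for large $N$ (using $r/N\to0$) with probability at least $1-2\exp(-cN^{1/2})$. Hence $\|\Phi\|_F\ge\tfrac{\sqrt N}{2}\|M\|_F$. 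On the intersection of the two events, of probability at least $1-N^{-1/2}-2\exp(-cN^{1/2})$,
\[
\alpha_{sp}(\Phi)=\frac{\|\Phi\|_{\max}\sqrt{Np}}{\|\Phi\|_F}\le 2\sqrt{2C\bar r}\cdot\sqrt{p}\,\frac{\max_j\|M_j\|}{\|M\|_F}.
\]

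It then remains to show the row-spikeness factor $\sqrt{p}\,\max_j\|M_j\|/\|M\|_F$ is $O(c_V)$. Here I would transfer the spikeness assumption on $V$ to $M=VL^\top$ via $\max_j\|M_j\|=\max_j\|LV_j\|\le\|L\|\max_j\|V_j\|\le\|L\|\,\tfrac{c_V}{\sqrt p}\|V\|_F$ together with $\|M\|_F=\|VL^\top\|_F\ge\sigma_r(L)\|V\|_F$, giving $\sqrt p\,\max_j\|M_j\|/\|M\|_F\le (\|L\|/\sigma_r(L))\,c_V$, a fixed multiple of $c_V$ controlled by the constants $c_L,\underline v,\overline v$ of Assumption \ref{assump: confound}; absorbing it into $c'$ yields $\alpha_{sp}(\Phi)\le c'c_V\sqrt{\bar r}$. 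The main obstacle is the denominator: obtaining a lower bound of the correct order $\sqrt N\|M\|_F$ hinges on the smallest singular value of the Gaussian factor $G$ concentrating near $\sqrt N$, and the $N^{1/2}$ exponent in the failure probability comes directly from the choice $t=N^{1/4}$. A secondary delicate point is that the relevant spikeness is that of $M=VL^\top$ rather than of $V$ itself, so care is needed to ensure the constant relating the two row-spikeness ratios stays controlled by the fixed parameters of Assumption \ref{assump: confound} rather than growing with $N$ or $p$.
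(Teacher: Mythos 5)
Your proposal is correct and takes essentially the same approach as the paper: you split $\alpha_{sp}(\Phi)$ into an upper bound on $\|\Phi\|_{\max}$ at scale $\sqrt{\log \bar{N}} \le \sqrt{\bar{r}}$ via a union bound (the paper does this through $\|\Phi\|_{\max} \le \max_i\|U_i\|\max_j\|V_j\|$ and a $\chi^2(r)$ tail bound over the $N$ rows, you through entrywise Gaussian tails over the $Np$ entries of $GM^\top$ --- both yielding the $1 - N^{-1/2}$ term) and a lower bound $\|\Phi\|_F \gtrsim \sqrt{N}\,\|M\|_F$ via the smallest singular value of the Gaussian factor using Vershynin's Theorem 5.39 with $t = N^{1/4}$, which is exactly the paper's argument (your trace inequality $\operatorname{tr}(M^\top M\,G^\top G)\ge \sigma^2_{\min}(G)\|M\|_F^2$ is the paper's SVD computation $\|UV^\top\|_F \ge \sigma_r(U)\|V\|_F$ in different clothing), before invoking the spikeness bound on $V$ from Assumption 5. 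The one point where you diverge --- your explicit residual factor $\|L\|/\sigma_r(L)$, which Assumption 5 does not in fact control --- is also present, but hidden, in the paper's proof: its claimed bound $\frac{1}{\sqrt{N}}\sigma_r(U) \ge (1 - C\sqrt{r/N} - N^{-1/4})\|L\|$ should read $\sigma_r(L)$ in place of $\|L\|$, so the subsequent cancellation of $\|L\|$ between numerator and denominator likewise presupposes a well-conditioned $L$; you have therefore matched the paper's argument and, to your credit, flagged the delicate step it glosses over.
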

\begin{proof}
According to the definition, $\alpha_{sp}(UV^\top) = \sqrt{Np}\frac{\|UV^\top\|_{\max}}{\|UV^\top\|_F}$. Obviously, 
\begin{equation*}
\|UV^\top\|_{\max} \le \max_{ij}(U_i^TV_j) \le \max_i\|U_i\|\max_{j}\|V_j\|
\end{equation*}
Next, we prove that $\|UV^\top\|_F \ge \sigma_r(U)\|V\|_F$. Suppose $U$ has SVD $\bar{U}_{n \times r}\bar{\Sigma}_{r \times r}\bar{V}^T_{r \times r}$ where $\bar{\Sigma}_{r \times r} = \operatorname{diag}(\sigma_1(U), ..., \sigma_r(U))$ and $\bar{V}^T\bar{V} = \bar{V}\bar{V}^T = I_{r \times r}$. Then 
\begin{align*}
\|UV\|_F &= \|\bar{U}_{n \times r}\bar{\Sigma}_{r \times r}\bar{V}^T_{r \times r}V\|_F \\
&= \|\begin{bmatrix} \sigma_1(U) & 0 & \dots & 0 \\ 
0 & \sigma_2(U) & \dots & 0 \\ 
\vdots & \vdots & \ddots & \vdots \\ 
0 & 0 & \dots & \sigma_r(U) \\ 
\end{bmatrix}\begin{bmatrix}
\bar{V}_1^TV \\ \bar{V}_2^TV \\ \vdots \\ \bar{V}_r^TV
\end{bmatrix}\|_F  \\
&= \sqrt{\sum_{k = 1}^{r}\|\sigma_k(U)\bar{V}_k^TV\|^2} \\ 
&\ge \sigma_r(U)\sqrt{\sum_{k = 1}^{r}\|\bar{V}_k^TV\|^2} \\
&= \sigma_r(U)\|\bar{V}^TV\|_F = \sigma_r(U)\|V\|_F
\end{align*}
Therefore, $\alpha_{sp}(UV^\top) \le \sqrt{Np}\frac{\frac{1}{\sqrt{N}}\max_i\|U_i\|\max_{j}\|V_j\|}{\frac{1}{\sqrt{N}}\sigma_r(U)\|V\|_F}$. 
Following the similar proof in Lemma \ref{lemma: bound_U}, we can prove that the following holds with high probability at least $1 - 2\exp(-cN^{1/2})$:
\begin{equation*}
\frac{1}{\sqrt{N}}\sigma_r(U) \ge (1 - C\sqrt{\frac{r}{N}} - \frac{1}{N^{1/4}})\|L\|
\end{equation*}
Under Assumption \ref{assump: confound}, $\|L^{-1}U_i\|^2 \sim \chi^2(r)$. Then according to Proposition 1 in \citep{hsu2012tail}, with probability at least $1 - \exp(-t^2/2)$, $\|U_i\| \le \sqrt{r + t\sqrt{2r} + t^2} \le \sqrt{r} + t$. Let $t = \sqrt{3\log N}$ and take union bound over $i = 1, ..., N$, then with high probability $1 - N^{-1/2}$ for any $i$, 
\begin{equation*}
\frac{1}{\sqrt{N}}\|U_i\| \le (\frac{\sqrt{r}}{\sqrt{N}} +  \frac{\sqrt{3\log N}}{\sqrt{N}})\|L\|
\end{equation*}
which implies that $\frac{1}{\sqrt{N}}\max_i\|U_i\| \le  (\frac{\sqrt{r}}{\sqrt{N}} +  \frac{\sqrt{3\log N}}{\sqrt{N}})\|L\|$.

Therefore, with high probability $1 - N^{-1/2} - 2\exp(-cN^{1/2})$,
\begin{equation*}
\alpha_{sp}(UV^\top) \le \frac{\sqrt{r} + \sqrt{3\log N}}{1 - C\frac{\sqrt{r}}{\sqrt{N}} - \frac{1}{N^{1/4}}}\sqrt{p}\frac{\max_{j}\|V_j\|}{\|V\|_F} \le c'c_V\sqrt{\bar{r}}
\end{equation*}
\end{proof}

\begin{lemma} \label{lemma: sv_ratio}
Under Assumption \ref{assump: confound}, $\frac{\sigma_r(\Phi)}{\sigma_1(\Phi)} \ge \sqrt{\frac{\underline{v}}{\underline{v} + 2\overline{v}}}$ with high probability $1 - 2\exp(-Cp^{\delta})$ given that $p^{1 + \delta}/N \to 0$ for some positive constant $\delta, C$.
\end{lemma}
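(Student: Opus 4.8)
The plan is to show that the singular values of $\Phi = UV^\top$ concentrate around those of the deterministic loading factor, so that the conditioning of $\Phi$ is inherited, up to a vanishing perturbation, from the window $[\underline{v} p,\ \overline{v} p]$ that Assumption \ref{assump: confound} imposes on $\sigma_i^2(VL^\top)$. Since the confounders $U_i$ are i.i.d.\ $N(0,\Sigma)$ with $\Sigma = LL^\top$, I would whiten them by writing $U = \Xi L^\top$, where $\Xi\in\mathbb{R}^{N\times r}$ has i.i.d.\ standard Gaussian entries, so that $U^\top U = N L G L^\top$ with $G := \tfrac1N \Xi^\top\Xi$. Substituting into $\Phi^\top\Phi = V U^\top U V^\top$ reduces the task to controlling the extreme eigenvalues of the rank-$r$ matrix $N\,B\,G\,B^\top$, where $B := VL^\top$ is the deterministic factor of Assumption \ref{assump: confound} and $\sigma_i^2(\Phi)$ are precisely its $r$ nonzero eigenvalues.

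Next I would quantify how close $G$ is to $I_r$. Applying Theorem 5.39 of \citep{vershynin2010introduction} to $\Xi$, exactly as in Lemma \ref{lemma: bound_U}, all singular values of $\Xi/\sqrt N$ lie in $[1-\gamma,\,1+\gamma]$ with $\gamma = C\sqrt{r/N} + t/\sqrt N$, with probability at least $1 - 2e^{-ct^2}$, whence $\rho := \|G - I_r\| \le 2\gamma + \gamma^2$. Choosing the deviation level $t \asymp p^{\delta/2}$ converts the failure probability into $2e^{-Cp^\delta}$, and the hypotheses $r/N\to 0$ and $p^{1+\delta}/N\to 0$ drive $\gamma\to 0$; in particular $\rho \le \underline{v}/(2\overline{v})$ for all large $N$.

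Finally I would combine the two ingredients through Weyl's inequality applied to the splitting $N B G B^\top = N B B^\top + N B(G-I_r)B^\top$. The unperturbed term $NBB^\top$ has its $r$ nonzero eigenvalues equal to $N\sigma_i^2(B)\in[N\underline{v} p,\ N\overline{v} p]$ by Assumption \ref{assump: confound}, while the perturbation obeys $\|NB(G-I_r)B^\top\|\le N\|B\|^2\rho\le N\overline{v} p\,\rho$. Weyl then yields $\sigma_r^2(\Phi)\ge N\underline{v} p - N\overline{v} p\,\rho$ and $\sigma_1^2(\Phi)\le N\overline{v} p(1+\rho)$, so that
\begin{equation*}
\frac{\sigma_r^2(\Phi)}{\sigma_1^2(\Phi)}\ \ge\ \frac{\underline{v}-\overline{v}\rho}{\overline{v}(1+\rho)}.
\end{equation*}
The right-hand side is decreasing in $\rho$ and equals exactly $\underline{v}/(\underline{v}+2\overline{v})$ at the threshold $\rho=\underline{v}/(2\overline{v})$; together with $\rho\le\underline{v}/(2\overline{v})$ from the previous step, taking square roots gives $\sigma_r(\Phi)/\sigma_1(\Phi)\ge\sqrt{\underline{v}/(\underline{v}+2\overline{v})}$ with the claimed probability.

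The step I expect to be most delicate is keeping the perturbation analysis honest in the first paragraph: the squared singular values of $\Phi$ are eigenvalues of the product $(U^\top U)(V^\top V)$ of two symmetric positive semidefinite matrices rather than of a single symmetric matrix, so Weyl cannot be applied factor by factor. The remedy is to work throughout inside the symmetric matrix $\Phi^\top\Phi = N B G B^\top$ (equivalently to note $\lambda_i(GB^\top B)=\lambda_i(G^{1/2}B^\top B G^{1/2})$) and to track the $r$-th largest eigenvalue, i.e.\ the smallest nonzero singular value, rather than $\lambda_{\min}$ of the $p\times p$ matrix $\Phi^\top\Phi$, which vanishes identically whenever $p>r$.
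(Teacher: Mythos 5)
Your proof is correct, but it takes a genuinely different route from the paper's. The paper works directly with the $p\times p$ Gram matrix: it shows $|x^\top(\tfrac1N VU^\top UV^\top - VL^\top LV^\top)x|\le\epsilon$ uniformly over $x\in\mathcal S^{p-1}\cap\operatorname{Null}^\perp(V)$ via a $1/4$-net of cardinality at most $9^r$, Bernstein's inequality for the sub-exponential variables $Z_i^2-\expect Z_i^2$ with $Z_i=U_i^\top V^\top x$, and a two-case choice of the additive accuracy ($\epsilon=p^{2+\delta}/N$ or $p^{1/2+\delta/2}/N$ according to whether $6\underline{v}p>1$), which is precisely where the hypothesis $p^{1+\delta}/N\to0$ enters, since the final ratio bound needs $\epsilon/p\le\underline{v}/2$. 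You instead whiten, $U=\Xi L^\top$, push all randomness into the $r\times r$ Gram matrix $G=\tfrac1N\Xi^\top\Xi$, control $\rho=\|G-I_r\|$ by the same Theorem 5.39 of \citep{vershynin2010introduction} already invoked in Lemma \ref{lemma: bound_U}, and finish with Weyl's inequality on the symmetric splitting $NBB^\top+NB(G-I_r)B^\top$ --- correctly flagging that one must perturb the symmetric matrix $\Phi^\top\Phi$ and track $\lambda_r$ rather than $\lambda_{\min}$. This buys several things: the perturbation is multiplicative rather than additive, so no constants involving $\|VL\|$ appear and no case analysis in $p$ is needed; the threshold $\rho=\underline{v}/(2\overline{v})$ recovers the constant $\sqrt{\underline{v}/(\underline{v}+2\overline{v})}$ exactly; and your argument in fact needs less than the stated hypothesis, since with $t\asymp p^{\delta/2}$ it suffices that $p^{\delta}/N\to0$ (together with $r/N\to0$) rather than $p^{1+\delta}/N\to0$. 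Conversely, the paper's net-plus-Bernstein argument uses only sub-Gaussianity of the one-dimensional marginals $U_i^\top V^\top x$, though since Theorem 5.39 also covers independent isotropic sub-Gaussian rows, neither route is meaningfully more general here. One cosmetic slip: with $U=\Xi L^\top$ one gets $\Phi^\top\Phi = N(VL)G(VL)^\top$, so your $B$ should be $VL$ rather than $VL^\top$; but Assumption \ref{assump: confound} bounds $\sigma_i(VL^\top)$ while the paper's own proof compares against $VL^\top LV^\top$, i.e., the paper conflates $LL^\top$ with $L^\top L$ in the same way, and since $L$ is defined only through $\Sigma=LL^\top$ one may rename $L^\top$ as $L$, so the slip is inessential in both places.
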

\begin{proof}
We aim to prove $|x^\top(\frac{1}{N}V U^\top UV^\top - V L^\top L V^\top)x| \le \epsilon$ for any $x$ on the $p$-dimensional unit sphere $\mathcal{S}^{p-1}$. Since  $x^\top(\frac{1}{N}VU^\top UV^\top - VL^\top L V^\top)x = 0$ for $x \in \operatorname{Null}(V)$, we only have to prove
\begin{equation*}
\max_{x \in \mathcal{S}^{p-1} \cap \operatorname{Null}^\perp(V)} |x^\top(\frac{1}{N}VU^\top UV^\top - V L^\top L V^\top)x| \le \epsilon
\end{equation*}
where $S^{p-1} \cap \operatorname{Null}^\perp(V)$ is a $r$-dimensional space.

Consider $\frac{1}{4}$-net $\mathcal{N}$ for $\mathcal{S}^{D-1} \cap \operatorname{Null}^\perp(V)$, according to Lemma 5.4 in \citep{vershynin2010introduction}, 
\begin{equation*}
\max_{x \in \mathcal{S}^{p-1} \cap \operatorname{Null}^\perp(V)} |x^\top(\frac{1}{N}VU^\top UV^\top - V L^\top L V^\top)x| \le 2 \max_{x \in \mathcal{N}}|x^\top \frac{1}{N}V U^\top U V^\top x - x^\top V L^\top L V^\top x|
\end{equation*}
So we only need to prove that $\max_{x \in \mathcal{N}}|x^\top \frac{1}{N}V U^\top U V^\top x - x^\top V L^\top L V^\top x| \le \frac{\epsilon}{2}$ with high probability. Note that $\frac{1}{N}x^\top V U^\top U V^\top x - x^\top V L^\top L V^\top x = \frac{1}{N}\sum_i (Z^2_i - \mathbb{E}(Z_i^2))$, where $Z_i = U_iV^\top x$ are mutually independent with $\mathbb{E}(Z_i) = 0$ and $\mathbb{E}(Z_i^2) = x^TV L^\top L V^\top x \le \|VL^\top\|^2$. It follows that the $Z_i^2 - \mathbb{E}Z_i^2$ are sub-Exponential with upper bounded sub-Exponential norm (Lemma 5.14 \citep{vershynin2010introduction}):
\begin{equation*}
\|Z_i^2 - \mathbb{E}Z_i^2\| \le \|Z_i^2\|_{\psi_1} + \mathbb{E}Z_i^2 \le 2\|Z_i\|^2_{\psi_2} + \mathbb{E}Z_i^2 \le 3\|VL\|^2
\end{equation*}
By the Berstein Inequality (Corollary 5.17 in \citep{vershynin2010introduction})
\begin{equation*}
\mathbb{P}(|x'(\frac{1}{N}VU^\top UV^\top - V L^\top L V^\top)x| \ge \frac{\epsilon}{2}) \le 2\operatorname{exp}(-c\min\{\frac{\epsilon}{6\|VL\|}, \frac{\epsilon^2}{36\|VL\|^2}\}N)
\end{equation*}
Furthermore, Lemma 5.2 in \citep{vershynin2010introduction} implies that $|\mathcal{N}| \le 9^r$. So taking union bound over $\mathcal{N}$ gives:
\begin{equation*}
\mathbb{P}(\max_{x \in \mathcal{N}}|x'(\frac{1}{N}VU^\top UV^\top - V L^\top L V^\top)x| \ge \frac{\epsilon}{2}) \le 2\operatorname{exp}(r\log9-\frac{c}{\tilde{K}}\min\{{\epsilon}, {\epsilon^2}\}N)
\end{equation*}
where $\tilde{K}^{-1} = \min\{\frac{1}{6\|VL\|^2}, \frac{1}{36\|VL\|^4}\}$. 

We consider two cases: 
\begin{enumerate}
\item For large enough $p$ ($6 \underline{v} p > 1$), take $\epsilon = \frac{p^{2+\delta}}{N}$, then for some positive constant $C$ and $r/p^{\delta} \to 0$,
\begin{equation*}
\mathbb{P}(\max_{x \in \mathcal{N}}|x^\top (\frac{1}{N}VU^\top UV^\top - V L^\top L V^\top)x| \ge \frac{\epsilon}{2}) \le 2\exp(r\log9-\frac{c}{36\underline{v}^2p^2}\epsilon N) \le 2\operatorname{exp}(-Cp^{\delta})
\end{equation*}
So with probability at least $1 -  2\operatorname{exp}(-Cp^{\delta})$,
\begin{equation*}
\frac{\sigma_r^2(UV^\top)}{\sigma_1^2(UV^\top)} \ge \frac{\sigma_r^2(VL^\top) -\epsilon}{\sigma_1^2(VL^\top) + \epsilon} \ge \frac{\underline{v} - \epsilon/p}{\overline{v} + \epsilon/p} = \frac{\underline{v} - \frac{p^{1+\delta}}{N}}{\overline{v} + \frac{p^{1+\delta}}{N}} \ge \frac{\underline{v}}{2\overline{v} + \underline{v}}
\end{equation*}
which is bounded away from $0$ for large enough $N, p$ such that $\frac{p^{1+\delta}}{N} \le \frac{\underline{v}}{2}$.
\item For moderate $p$ ($6\underline{v} p \le 1$), take $\epsilon = \frac{p^{1/2+\delta/2}}{N}$ and then 
\begin{equation*}
\mathbb{P}(\max_{x \in \mathcal{N}}|x'(\frac{1}{N}V^TU^TUV - V^TV)x| \ge \frac{\epsilon}{2}) \le 2\operatorname{exp}(r\log9-\frac{c}{6\underline{v}p}{\epsilon^2}N) \le 2\operatorname{exp}(-Cp^{\delta}),
\end{equation*}
which implies the same conclusion.
\end{enumerate}
\end{proof}

\newpage
\section{More Numerical Results}
\begin{figure}[h] \label{figure: higherdim}
  \centering
  \includegraphics[width = \textwidth]{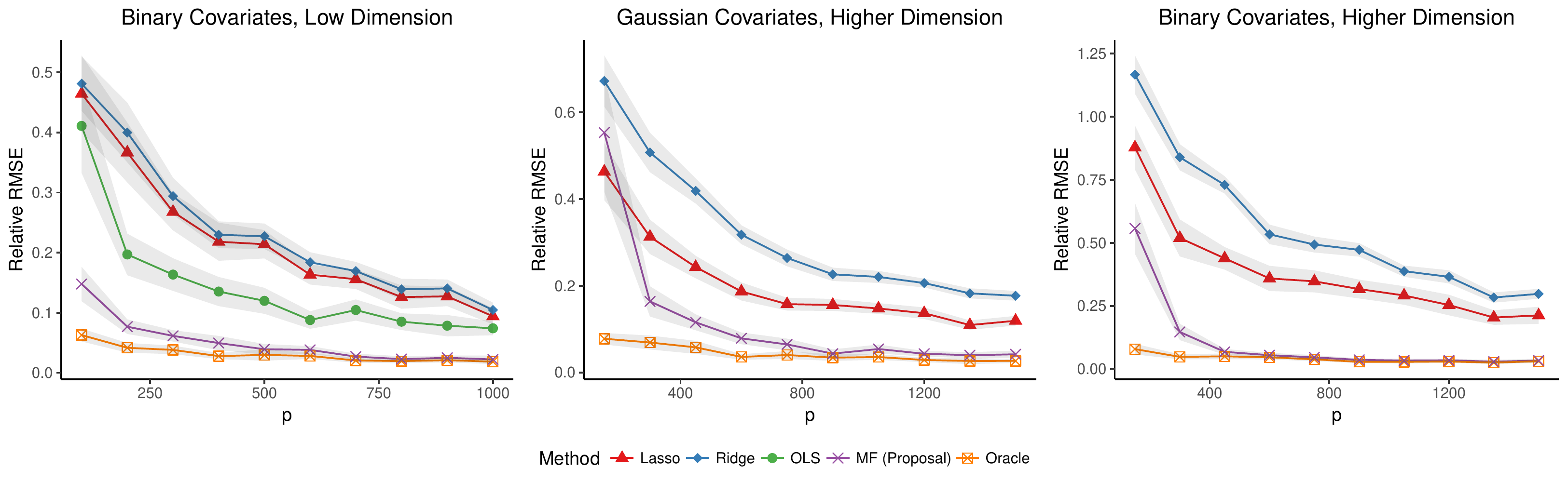}
  \caption{Relative RMSE for binary covariates in the low dimensional setting as in Section 4.1 and the relative RMSE for the setting where $p$ varies from $150$ to $1500$ and $N = p/1.5$. }
\end{figure}

\begin{figure}[h!] \label{figure: missing}
  \centering
  \includegraphics[width = \textwidth]{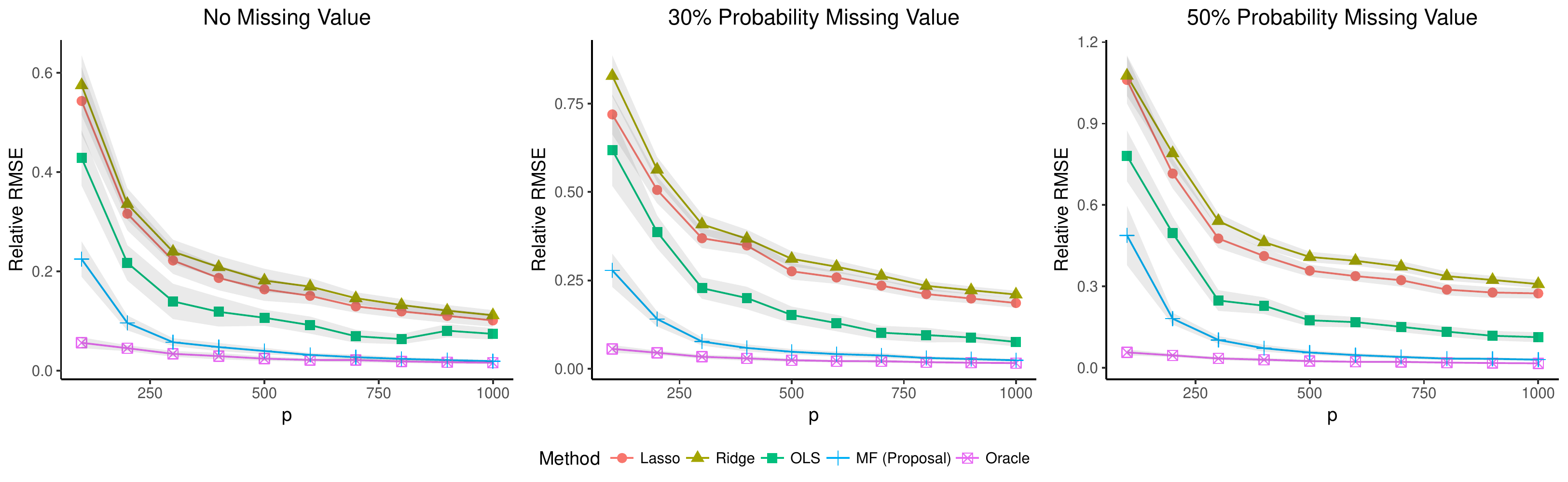}
  \caption{Relative RMSE of ATE estimators for binary covariates with $N = 200, 400, \dots, 2000$ and $p = N/2$. Each entry is set to be missing value with equal probability $0$, $0.3$, or $0.5$. }
\end{figure}

\begin{figure}[h!] \label{figure: fixp}
  \centering
  \includegraphics[height = 4.5cm, width = 10cm]{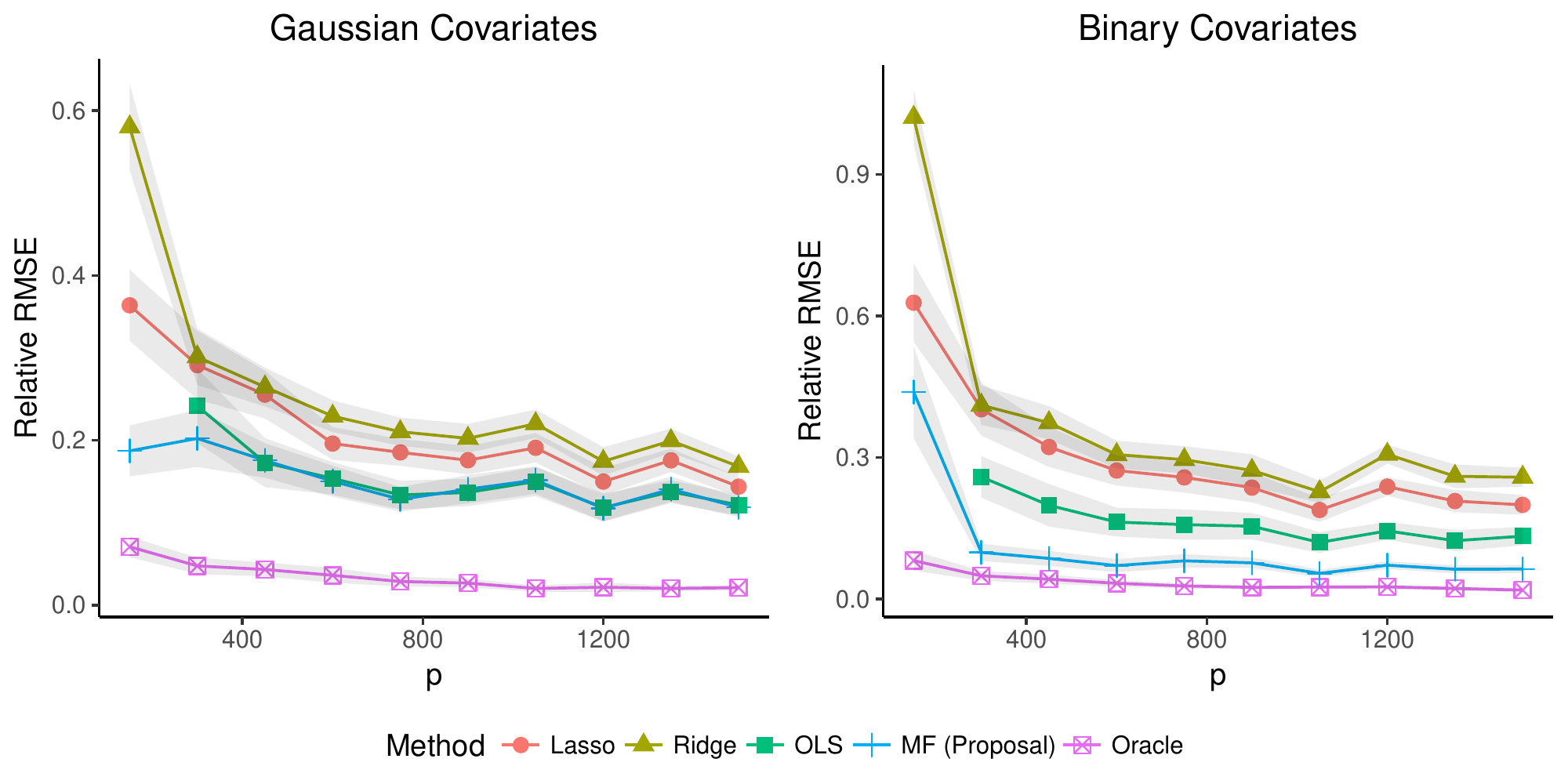}
  \caption{Relative RMSE of ATE estimators for Gaussian and Binary covariates with $N = 150, 300, \dots, 1500$ and $p = 200$. }
\end{figure}

\end{document}